\newcommand{\name}{LOT\xspace}
\newcommand{\ot}{\name\xspace}
\newcommand{\fft}{\text{FFT}}
\title{\name: Layer-wise Orthogonal Training on Improving $\ell_2$ Certified Robustness}
\author{Xiaojun Xu~~~~~~~~Linyi Li~~~~~~~~Bo Li
\\
University of Illinois Urbana-Champaign 
\\
\texttt{\{\href{mailto:xiaojun3@illinois.edu}{xiaojun3},
\href{mailto:linyi2@illinois.edu}{linyi2},  \href{mailto:lbo@illinois.edu}{lbo}\}@illinois.edu}
}
\begin{document}

\maketitle

\begin{abstract}
    Recent studies show that training deep neural networks (DNNs) with Lipschitz constraints are able to enhance adversarial robustness and other model properties such as stability.
    In this paper, we propose a layer-wise orthogonal training method (\name) to effectively train 1-Lipschitz convolution layers via parametrizing an orthogonal matrix with an unconstrained matrix. We then efficiently compute 
    the inverse square root of a convolution kernel by transforming the input domain to the Fourier frequency domain.
    On the other hand, as existing works show that semi-supervised training helps improve \textit{empirical} robustness, we aim to bridge the gap and prove that semi-supervised learning  also improves the \textit{certified} robustness of Lipschitz-bounded models.
    We conduct comprehensive evaluations for \name under different settings. We show that \name significantly outperforms baselines regarding deterministic $\ell_2$ certified robustness, and scales to deeper neural networks.
    Under the supervised scenario, we improve the state-of-the-art certified robustness for all architectures (e.g. from 59.04\% to 63.50\% on CIFAR-10 and from 32.57\% to 34.59\% on CIFAR-100 at radius $\rho=36/255$ for 40-layer networks).
    With semi-supervised learning over unlabelled data, we are able to improve state-of-the-art certified robustness on CIFAR-10 at $\rho=108/255$ from 36.04\% to 42.39\%.
    In addition, \name consistently outperforms baselines on different model architectures with only 1/3 evaluation time.
     
\end{abstract}


\section{Introduction}
Given the wide applications of deep neural networks (DNNs), ensuring their robustness against potential adversarial attacks~\cite{goodfellow2014explaining,qiu2020semanticadv,xiao2018generating,xiao2018spatially} is of great importance. There has been a line of research providing defense approaches to improve the \textit{empirical} robustness of DNNs~\cite{madry2017towards,yang2021trs,xiao2018characterizing,xiao2019advit}, and certification methods to \textit{certify} DNN robustness~\cite{li2020sok,YangImp2022,li2022dsrs,li2021tss,yang2022on}. Existing certification techniques can be categorized as deterministic and probabilistic certifications~\cite{li2020sok}, and in this work we will focus on improving the deterministic $\ell_2$ certified robustness by training 1-Lipschitz DNNs.

Although different approaches have been proposed to empirically enforce the Lipschitz constant of the trained model~\cite{tsuzuku2018lipschitz},  it is still challenging to strictly ensure the 1-Lipschitz, which can lead to tight robustness certification.
One recent work SOC~\cite{singla2021skew} proposes to parametrize the orthogonal weight matrices with the exponential of skew-symmetric matrices (i.e. $W = \exp(V-V^\intercal)$). However, such parametrization will be biased when the matrix norm is constrained and the expressiveness is limited, especially when they rescale $V$ to be small to help with convergence.
In this work, we propose a layer-wise orthogonal training approach (\name) by parameterizing the orthogonal weight matrix with an unconstrained matrix $W = (V V^\intercal)^{-\frac12} V$. In order to calculate the inverse square root for convolution kernels, we will perform Fourier Transformation and calculate the inverse square root of matrices on the frequency domain using Newton's iteration. In our parametrization, the output is agnostic to the input norm (i.e. scaling $V=\alpha V$ does not change the value of $W$).
We show that such parametrization achieves higher model expressiveness and robustness (Section~\ref{sec:exp-sup}), and provides more meaningful representation vectors (Section~\ref{sec:exp-abl}). 

In addition, several works have shown that semi-supervised learning will help improve the empirical robustness of models under adversarial training and other settings~\cite{carmon2019unlabeled}.
In this work, we take the first attempt to bridge semi-supervised training with \textit{certified} robustness based on our 1-Lipschitz DNNs.
Theoretically, we show that  semi-supervised learning can help improve the error bound of Lipschitz-bounded models. We also lower bound the certified radius as a function of the model performance and Lipschitz property.
Empirically, we indeed observe that including un-labelled data will help with the certified robustness of 1-Lipschitz models, especially at a larger radius (e.g. from 36.04\% to 42.39\% at $\rho=108/255$ on CIFAR-10).

We conduct comprehensive experiments to evaluate our approach, and we show that \name significantly outperforms the state-of-the-art in terms of the deterministic $\ell_2$ certified robustness.
We also conduct different ablation studies to show that (1) \name can produce more meaningful features for visualization; (2) residual connections help to smoothify the training of \name model.

\underline{\textbf{Technical contributions}}. In this work, we aim to train a certifiably robust 1-Lipschitz model and also analyze the certified radius of the Lipschitz bounded model under semi-supervised learning. 
\begin{itemize}[leftmargin=*]
\vspace{-1em}
\item We propose a layer-wise orthogonal training method \name for convolution layers to train 1-Lipschitz models based on Newton's iteration, and thus compute the deterministic certified robustness for the model. We prove the convergence of Newton's iteration used in our algorithm.
\item We derive the certified robustness of lipschitz constrained model under semi-supervised setting, and formally show  how semi-supervised learning affects the certified radius.
\item We evaluate our \name method under different settings (i.e. supervised and semi-supervised) on different models and datasets. With supervised learning, we show that it significantly outperforms  state-of-the-art baselines, and on some deep architectures the performance gain is over 4\%. With semi-supervised learning, we further improve certified robust accuracy by over 6\% at a large radius.
\end{itemize}

\vspace{-1em}
\section{Related Work}
\vspace{-1em}
\paragraph{Certified Robustness for Lipschitz Constrained Models}
Several studies have been conducted to explore the Lipschitz-constrained models for certified robustness. \cite{tsuzuku2018lipschitz} first certifies model robustness based on its Lipschitz constant and propose training algorithms to regularize the model Lipschitz. Multiple works~\cite{cisse2017parseval,miyato2018spectral, qian2018l2,gouk2021regularisation} have been proposed to achieve 1-Lipschitz during training for linear networks by regularizing or normalizing the spectral norm of the weight matrix. However, when applying these approaches on convolution layers, the spectral norm is bounded by unrolling the convolution into linear operations, which leads to a loose Lipschitz bound~\cite{wang2020orthogonal}. Recently, \cite{anil2019sorting} shows that the 1-Lipschitz requirement is not enough for a good robust model; rather, the gradient-norm-preserving property is important. Besides these training-time techniques, different approaches have been proposed to calculate a tight Lipschitz bound during evaluation. \cite{fazlyab2019efficient} upper bounds the Lipschitz with semi-definite programming while \cite{latorre2020lipschitz} upper bounds the Lipschitz with polynomial optimization. In this work we aim to effectively train 1-Lipschitz convolution models.

\vspace{-1em}
\paragraph{Orthogonal Convolution Neural Networks}
\cite{li2019preventing} first proposes to directly construct orthogonal convolution operations. Such operations are not only 1-Lipschitz, but also gradient norm preserving, which provides a higher model capability and a smoother training process~\cite{anil2019sorting}. BCOP~\cite{li2019preventing} trains orthogonal convolution by iteratively generating $2\times 2$ orthogonal kernels from orthogonal matrices. \cite{trockman2021orthogonalizing} proposes to parametrize an orthogonal convolution with Cayley transformation $W = (I-V+V^\intercal)(I+V-V^\intercal)^{-1}$ where the convolution inverse is calculated on the Fourier frequency domain. ECO~\cite{yu2021constructing} proposes to explicitly normalize all the singular values~\cite{sedghi2018singular} of convolution operations to be 1. So far, the best-performing orthogonal convolution approach is  SOC~\cite{singla2021skew}, where they parametrize the orthogonal convolution with $W=\exp (V-V^\intercal)$ where the exponential and transpose are defined with the convolution operation. However, one major weakness of SOC is that it will rescale $V$ to be small so that the approximation of $\exp$ can converge soon, which will impose a bias on the resulting output space. For example, when $V$ is very small $W$ will be close to $I$. Such norm-dependent property is not desired, and thus we propose a parametrization that is invariant to rescaling. Finally, \cite{singla2021improved} proposes several techniques for orthogonal CNNs, including a generalized Householder (HH) activation function, a certificate regularizer (CReg) loss, and a last layer normalization (LLN). They can be integrated with our training approach to improve model robustness.

\section{Problem Setup}
\subsection{Lipschitz Constant of Neural Networks and Certified Robustness}
Let $f:\mathbb{R}^d \rightarrow \mathbb{R}^C$ denote a neural network for classification, where $d$ is the input dimension and $C$ is the number of output classes. The model prediction is given by $\argmax_c f(x)_c$, where $f(x)_c$ represents the prediction probability for class $c$. The Lipschitz constant of the model under $p$-norm is defined as: $\text{Lip}_p(f) = \sup \frac{||f(x_1)-f(x_2)||_p}{||x_1-x_2||_p}\quad \forall x_1,x_2 \in \mathbb{R}^d$.
Unless specified, we will focus on $\ell_2$-norm and use $\text{Lip}(f)$ to denote $\text{Lip}_2(f)$ in this work. We can observe that the definition of model Lipschitz aligns with its robustness property - both require the model not to change much w.r.t. input changes.
Formally speaking, define $\mathcal{M}_f(x)=\max_i f(x)_i - \max_{j \neq \argmax_i f(x)_i} f(x)_j$ to be the prediction gap of $f$ on the input $x$, then we can guarantee that $f(x)$ will not change its prediction within $|x'-x|<r$, where
\begin{align*}
    r=\mathcal{M}_f(x) / (\sqrt{2} \text{Lip}(f)).
\end{align*}
Therefore, people have proposed to utilize the model Lipschitz to provide certified robustness and investigated training algorithms to train a model with a small Lipschitz constant.

Note that the Lipschitz constant of a composed function $f=f_1 \circ f_2$ satisfies $\text{Lip}(f) \leq \text{Lip}(f_1) \times \text{Lip}(f_2)$. Since a neural network is usually composed of several layers, we can investigate the Lipschitz of each layer to calculate the final upper bound. If we can restrict each layer to be 1-Lipschitz, then the overall model will be 1-Lipschitz with an arbitrary number of layers.

\subsection{Orthogonal Linear and Convolution Operations} Consider a linear operation with equal input and output dimensions $y=Wx$, where $x,y \in \mathbb{R}^n$ and $W \in \mathbb{R}^{n\times n}$. We say $W$ is an orthogonal matrix if $WW^\intercal=W^\intercal W = I$ and call $y=Wx$ an orthogonal linear operation. The orthogonal operation is not only 1-Lipschitz, but also norm-preserving, i.e., $||Wx||_2 = ||x||_2$ for all $x \in \mathbb{R}^n$. If the input and output dimensions do not match, i.e., $W \in \mathbb{R}^{m \times n}$ where $n \neq m$,  we say $W$ is semi-orthogonal if either $WW^\intercal=I$ or $W^\intercal W=I$. The semi-orthogonal operation is 1-Lipschitz and non-expansive, i.e., $||Wx||_2 \leq ||x||_2$.

The orthogonal convolution operation is defined in a similar way. Let $y=W \circ X$ be an orthogonal operation, where $x,y \in \mathbb{R}^{c\times w\times w}$ and $W \in \mathbb{R}^{c\times c\times k \times k}$. We say $W$ is an orthogonal convolution kernel if $W\circ W^\intercal=W^\intercal \circ W = I_{conv}$ where the transpose here refers to the convolution transpose and $I_{conv}$ denotes the identity convolution kernel. Such orthogonal convolution is 1-Lipschitz and norm-preserving. When the input and output channel numbers are different, the semi-orthogonal convolution kernel $W$ satisfies either $W\circ W^\intercal = I_{conv}$ or $W^\intercal \circ W = I_{conv}$ and it is 1-Lipschitz and non-expansive.

\section{\name: Layer-wise Orthogonal Training}

In this section, we propose our \name framework to achieve certified robustness.\footnote{The code is available at \url{https://github.com/AI-secure/Layerwise-Orthogonal-Training}.} We will first introduce how \name layer works to achieve 1-Lipschitz. The key idea of our method is to parametrize an orthogonal convolution $W$ with an un-constrained convolution kernel $W = (V V^\intercal)^{-\frac12} V$. Next, we will propose several techniques to improve the training and evaluation processes of our model. Finally, we discuss how semi-supervised learning can help with the certified robustness of our model.
\subsection{1-Lipschitz Neural Networks via \name}
    \label{subsec:method-ot}
Our key observation is that we can parametrize an orthogonal matrix $W \in \mathbb{R}^{n\times n}$ with an unconstrained matrix $V \in \mathbb{R}^{n\times n}$ by $W=(V V^\intercal)^{-\frac12} V$~\cite{huang2020controllable}.
In addition, this equation also holds true in the case of convolution kernel - given any convolution kernel $V \in \mathbb{R}^{c\times c \times k \times k}$, where $c$ denotes the channel number and $k$ denotes the kernel size, we can get an orthogonal convolution kernel by $W=(V\circ V^\intercal)^{-\frac12}\circ V$, where transpose and inverse square root are with respect to the convolution operations. The orthogonality of $W$ can be verified by $W \circ W^\intercal = W^\intercal \circ W = I_{conv}$, where $I$ is the identity convolution kernel. This way, we can parametrize an orthogonal convolution layer by training over the un-constrained parameter $V$.

Formally speaking, we will parametrize an orthogonal convolution layer by $Y=(V\circ V^\intercal)^{-\frac12}\circ V \circ X$, where $X,Y \in \mathbb{R}^{c \times w \times w}$ and $w\geq k$ is the input shape. The key obstacle here is how to calculate the inverse square root of a convolution kernel. Inspired by \cite{trockman2021orthogonalizing}, we can leverage the convolution theorem which states that the convolution in the input domain equals the element-wise multiplication in the Fourier frequency domain. In the case of multi-channel convolution, the convolution corresponds to the matrix multiplication on each pixel location. That is, let $\fft:\mathbb{R}^{w \times w} \rightarrow \mathbb{C}^{w\times w}$ be the 2D Discrete Fourier Transformation and $\fft^{-1}$ be the inverse Fourier Transformation. We will zero-pad the input to $w\times w$ if the original shape is smaller than $w$. Let $\tilde{X}_{i}=\fft(X_{i})$ and $\tilde{V}_{j,i}=\fft(V_{j,i})$ denote the input and kernel on  frequency domain, then we have:
\begin{align*}
    \fft(Y)_{:,a,b} = (\tilde{V}_{:,:,a,b} \tilde{V}_{:,:,a,b}^*)^{-\frac12} \tilde{V}_{:,:,a,b} \tilde{X}_{:,a,b}
\end{align*}
in which multiplication, transpose and inverse square root operations are matrix-wise. Therefore, we can first calculate $\fft(Y)$ on the frequency domain and perform the inverse Fourier transformation to get the final output.

We will use Newton's iteration  to calculate the inverse square root of the positive semi-definite matrix $A=\tilde{V}_{:,:,a,b} \tilde{V}_{:,:,a,b}^*$ in a differentiable way~\cite{lin2017improved}. If we initialize $Y_0=A$ and $Z_0=I$ and perform the following update rule:
\begin{equation}
    Y_{k+1} =\frac12 Y_k (3I-Z_k Y_k), \quad
    Z_{k+1} =\frac12 (3I-Z_k Y_k) Z_k, \\
\label{eq:newton-iter}
\end{equation}
$Z_k$ will converge to $A^{-\frac12}$ when $||I-A||_2 < 1$. The condition can be satisfied by rescaling the parameter $V=\alpha V$, noticing that the scaling factor will not change the resulting $W$.
In practice, we execute a finite number of Newton's iteration steps and we provide a rigorous error bound for this finite iteration scheme in \Cref{subsec:newton-error-control} to show that the error will decay exponentially.
{
In addition, we show that although the operation is over complex number domain after the FFT, the resulting parameters will still be real domain, as shown below.

\begin{adxtheorem}
    Say $J \in \mathbb{C}^{m \times m}$ is unitary so that $J^* J = I$, and $V = J\tilde{V} J^*$ for $V \in \mathbb{R}^{m \times m}$ and $\tilde{V}\in \mathbb{C}^{m \times m}$. Let $F(V) = (V V^*)^{-\frac12} V$ be our transformation. Then $F(V) = J F(\tilde{V}) J^*$.
\end{adxtheorem}

\begin{proof}
    First, notice that $V^T = V^* = J \tilde{V}^* J^*$. Second, we have:
    \begin{align*}
        (\tilde{V} \tilde{V}^*)^{-1} &= J^* (J \tilde{V} \tilde{V}^* J^*)^{-1} J \\
        &= (J^* (J \tilde{V} \tilde{V}^* J^*)^{-\frac12}J)^2,
    \end{align*}
    so that 
    $(\tilde{V} \tilde{V}^*)^{-\frac12} = J^* (J \tilde{V} \tilde{V}^* J^*)^{-\frac12}J$.
    Thus, we have
    \begin{align*}
        J^* F(V) J &= J^* (V V^T)^{-\frac12} V J \\
        & 
        \overset{\text{by }V=J\tilde{V}J^*}{=} J^* (J\tilde{V}J^* J\tilde{V}^*J^*)^{-\frac12}J\tilde{V}J^*J = J^* (J\tilde{V}\tilde{V}^*J^*)^{-\frac12}J\tilde {V}
        \\
        &= (\tilde{V} \tilde{V}^*)^{-\frac12} \tilde{V}.
    \end{align*}
\end{proof}
\begin{remark}
    From this theorem it is clear that our returned value $J F(\tilde{V}) J^*$ equals to the transformed version of the original matrix $F(V) \in \mathbb{R}^{m\times m}$, and thus is guaranteed to be in the real domain.
\end{remark}
}

\paragraph{Circular Padding vs. Zero Padding} When we apply the convolution theorem to calculate the convolution $Y=W\circ X$ with Fourier Transformation, the result implicitly uses circular padding. However, in neural networks, zero padding is usually a better choice. Therefore, we will first perform zero padding on both sides of input $X^{pad}=\text{zero\_pad}(X)$ and calculate the resulting $Y^{pad}=W\circ X^{pad}$ with Fourier Transformation. Thus, the implicit circular padding in this process will actually pad the zeros which we padded beforehand. Finally, we truncate the padded part and get our desired output $Y = \text{truncate}(Y^{pad})$. We empirically observe that this technique helps improve the model robustness as shown in \Cref{sec:app-circ-pad}.

\paragraph{When Input and Output Dimensions Differ} In previous discussion, we assume that the convolution kernel $V$ has equal input and output channels. In the case $W\in \mathbb{R}^{c_{out}\times c_{in} \times k \times k}$ where $c_{out} \neq c_{in}$, we aim to get a semi-orthogonal convolution kernel $W$ (i.e. $W\circ W^\intercal = I_{conv}$ if $c_{out} < c_{in}$ or  $W^\intercal\circ W = I_{conv}$ if $c_{out} > c_{in}$). As pointed out in \cite{huang2020controllable}, calculating $W$ with Newton's iteration will naturally lead to a semi-orthogonal convolution kernel when the input and output dimensions differ.

\paragraph{Emulating a Larger Stride}
To emulate the case when the convolution stride is 2, we follow previous works~\cite{singla2021skew} and use the invertible downsampling layer~\cite{jacobsen2018revnet}, in which the input dimension $c_{in}$ will be increased by $\times 4$ times.
Strides larger than 2 can be emulated with similar techniques if needed.

\paragraph{Overall Algorithm}
Taking the techniques we discussed before, we can get our final \name convolution layer. The detailed algorithm is shown in Appendix~\ref{sec:app-alg}.
First, we will pad the input to prevent the implicit circular padding mechanism and pad the kernel so that they are in the same shape. Next, we perform the Fourier transformation and calculate the output on the frequency domain with Newton's iteration.
Finally, we perform inverse Fourier transformation and return the desired output.

\paragraph{Comparison with SOC}
Several works have been proposed on orthogonal convolution layers with re-parametrization\cite{trockman2021orthogonalizing,singla2021skew}, among which the SOC approach via $W=\exp (V - V^\intercal)$ has achieved the best performance. Compared with SOC, \name has the following advantages. \underline{First}, the parametrization in \name is norm-independent. Rescaling $V$ to $\alpha V$ will not change the resulting $W$. By comparison, $V$ with a smaller norm in SOC will lead to a $W$ closer to the identity transformation. Considering that the norm of $V$ will be regularized during training (e.g. SOC will re-scale $V$ to have a small norm; people usually initialize weight to be small and impose l2-regularization during training), the orthogonal weight space in SOC may be biased. \underline{Second}, we can see that \name is able to model any orthogonal kernel $W$ by noticing that $(W W^\intercal)^{-\frac12}W$ is $W$ itself; by comparison, SOC cannot parametrize all the orthogonal operations. For example, in the case of orthogonal matrices, the exponential of a skew-symmetric matrix only models the special orthogonal group (i.e. the matrices with +1 determinant). \underline{Third}, we directly handle the case when $c_{in} \neq c_{out}$, while SOC needs to do extra padding so that the channel numbers match. \underline{Finally}, \name is more efficient during evaluation, when we only need to perform the Fourier and inverse Fourier transformation as extra overhead, while  SOC needs multiple convolution operations to calculate the exponential. We will further show quantitative comparisons with SOC in \Cref{sec:exp-sup}.

The major limitation of \name is its large overhead during training, since we need to calculate Newton's iteration in each training step which takes more time and memory.
In addition, we sometimes observe that Newton's iteration is not stable when we perform many steps with 32-bit precision. To overcome this problem, we will pre-calculate Newton's iteration with 64-bit precision during evaluation, as we will introduce in Section~\ref{sec:imp-detail}.

\subsection{Training and Evaluation of \name}
\label{sec:imp-detail}

\paragraph{Smoothing the Training Stage}
In practice, we observe that the \name layers are highly non-smooth with respect to the parameter $V$ (see Section~\ref{sec:exp-abl}). Therefore, the model is difficult to converge during the training process especially when the model is deep. To smooth the training, we propose two techniques. First, we will initialize all except bottleneck layers where $c_{in}=c_{out}$ with identity parameter $V=I$. The bottleneck layers where $c_{in}=c_{out}$ will still be randomly initialized. Second, as pointed out in \cite{li2018visualizing}, residual connection helps with model smoothness. Therefore, for the intermediate layers, we will add the 1-Lipschitz residual connection $y=\lambda x + (1-\lambda) f(x)$. Some work suggests that $\lambda$ here can be trainable~\cite{trockman2021orthogonalizing}, while we observe that setting $\lambda=0.5$ is enough.

\paragraph{Speeding up the Evaluation Stage}
Notice that after the model is trained, the orthogonal kernel $\tilde{W}$ will no longer change. Therefore, we can pre-calculate Newton's iteration and use the result for each evaluation step. Thus, the only runtime overhead compared with a standard convolution layer during evaluation is the Fourier transformation part. In addition, we observe that using 64-bit precision instead of the commonly used 32-bit precision in Newton's iteration will help with the numerical stability. Therefore, when we pre-calculate $\tilde{W}$, we will first transform $\tilde{V}$ into float64. After Newton's iterations, we transform the resulting $\tilde{W}$ back to float32 type for efficiency.


\subsection{Semi-supervised Learning}
Existing works have shown that semi-supervised learning helps improve empirical robustness~\cite{carmon2019unlabeled}, while the impact of semi-supervised learning on certified robustness has not been explored yet. We aim to bridge this gap and investigate whether applying semi-supervised learning also benefits the certified robustness of Lipschitz constrained models. As we will discuss in Section~\ref{sec-theory-cert}, we  theoretically show that the certified robustness of a Lipschitz-bounded model can be improved by semi-supervised learning. In practice, suppose we have a (small) labelled dataset $\mathcal{D}_s=\{(x_i,y_i)\}$ and a (large) unlabelled dataset $\mathcal{D}_{u}=\{x'_i\}$, we will first train a labeller model $G_{pl}$ over the labelled dataset $\mathcal{D}_s$. Then we use $G_{pl}$ to assign pseudo-labels to the unlabelled dataset to get $\mathcal{D'}_{u}=\{(x'_i,G_{pl}(x'_i))\}$. Finally, we use the overall dataset $\mathcal{D}_{s}+\mathcal{D'}_{u}$ to train the student model $G$ as the final model. We assume that both $G_{pl}$ and $G$ will use the \name model architecture.

\section{Theoretical Analysis}
\label{sec:theory}
In this section, we will introduce our theoretical analysis to show that the certified robustness of Lipschitz-constrained models will benefit from semi-supervised learning. We make similar assumptions with \cite{wei2020theoretical} which provides error bound of discrete models in semi-supervised learning, while we will analyze a (Lipschitz-bounded) continuous model and  its certified accuracy.

We first introduce \textit{data expansion} proposed in \cite{wei2020theoretical}, which we will adopt in our analysis. Let $\mathcal{T}$ denote some set of transformations and define $\mathcal{B}\triangleq \{x' | \exists T \in \mathcal{T} \text{ such that } ||x'-T(x)||<r\}$ to be the set of points within distance $r$ from any of the transformation of $x$. The neighbourhood of $x$ is defined by $\mathcal{N}=\{x'|\mathcal{B}(x) \cap \mathcal{B}(x') \neq \emptyset \}$. Let $P$ denote the data distribution and $P_i$ denote the distribution of data in the $i$-th class. The expansion property is defined as:
\begin{adxdefinition} [$(a,c)$-expansion, as in \cite{wei2020theoretical}]
We say that the class-conditional distribution $P_i$ satisfies $(a,c)$-expansion if for all $V \subseteq \mathcal{X}$ with $P_i(V)<a$, the following holds:
\begin{align*}
    P_i(\mathcal{N}(V)) \geq \min \{cP_i(V), 1\}
\end{align*}
if $P_i$ satisfies $(a,c)$-expansion for all classes in $P$, then we say $P$ satisfies $(a,c)$-expansion.
\end{adxdefinition}

Next, we define the classifier and the loss function in the continuous case. For simplicity, we assume a binary classification task here.
\begin{adxdefinition} [Continuous model $G$ and loss functions]
We assume a binary classification task, where $G^*(x), G_{pl}(x) \in \{0,1\}$ are discrete labels, and the trained model is $G(x) \in [0,1]$. We define the loss function $L_\mathsf{m}(G,G^*)\in[0,1]$ to be:
\begin{align*}
    L_\mathsf{m}(G,G^*) = \mathbb{E}_{x\in\mathcal{X}}[G^*(x)(1-G(x)) + (1-G^*(x))G(x)]
\end{align*}
and the error of $G$ is defined as $\text{Err}_\mathsf{m}(G) = L_\mathsf{m}(G,G^*)$.
\end{adxdefinition}
\begin{remark}
Here we provide a general setting here, and if we assume $G(x) \in \{0,1\}$, $L_\mathsf{m}(G,G^*)$ becomes the 0-1 loss in \cite{wei2020theoretical}.
\end{remark}

Then we define the \textit{marginal} consistency set $S_B^\mathsf{m}(G;\delta)$ and loss $R_B^\mathsf{m}(G;\delta)$ in the continuous case. The consistency set includes cases in which the model prediction will not change a lot within the neighbourhood, and the consistency loss measures the probability that the model is not consistent over the population.
\begin{adxdefinition} [Marginal consistency set and consistency loss]
We define the set $S_B^\mathsf{m}(G;\delta)$ in which the  ratio of marginal prediction change on each input $x$ is no larger than $\delta$ within its neighbourhood:
\begin{align*}
    S_B^\mathsf{m}(G;\delta) = \{x | \frac{G(x)}{G(x')} \geq 1-\delta \text{ and } \frac{1-G(x)}{1-G(x')} \geq 1-\delta \quad \forall x' \in \mathcal{B}(x) \}.
\end{align*}
and $R_B^\mathsf{m}(G;\delta)$ denotes the marginal consistency loss of the probability that $x$ is not in $S_B^\mathsf{m}(G;\delta)$:
\begin{align*}
R_B^\mathsf{m}(G;\delta) = \mathbb{P}_{x\in P} [x \notin S_B^\mathsf{m}(G;\delta)].
\end{align*}
\end{adxdefinition}
\begin{remark}
(1) We may use $S_B^\mathsf{m}(G)$ and $R_B^\mathsf{m}(G)$ for abbreviation when the choice of $\delta$ does not lead to confusion; (2) when $\delta=0$, $S_B^\mathsf{m}(G)$ and $R_B^\mathsf{m}(G)$ requires that the prediction is exactly the same within the neighbourhood, which are reduced to $S_B(G)$ and $R_B(G)$ defined in \cite{wei2020theoretical}. 
\end{remark}

Finally, we define the certified robustness radius $\text{CertR}(G)$ as the average radius over the population in which the model keeps its correct prediction:
\begin{align*}
    \text{CertR}(G) = \mathbb{E}_{x\in\mathcal{X}}\big[ \sup r \text{ s.t.} \forall ||x'-x||_2<r, G(x')=G(x)=G^*(x) \big]
\end{align*}
Note that when $G(x)\neq G^*(x)$, the certified radius $r$ will be 0.

\subsection{Error Bound and Certified Radius of Lipschitz-bounded Model}
\label{sec-theory-cert}
Let $\mathcal{M}(G_{pl}) = \{x:G_{pl}(x) \neq G^*(x)\}$ denote the set in which the pseudolabel is wrong. We will have similar separation and expansion assumptions on the data distribution as in \cite{wei2020theoretical}.
\begin{assumption}[Separation]
\label{thm:separation}
We assume that $P$ is separable with probability $1-\mu$ by ground-truth classifier $G^*$, i.e., $R_B^\mathsf{m}(G^*;0) \leq \mu$.
\end{assumption}
\begin{assumption}[Expansion]
\label{thm:expansion}
Define $\overline{a} \triangleq \max_i \{P_i(\mathcal{M}(G_{pl}))\}$ to be the maximum fraction of incorrectly pseudolabeled examples in any class. We assume that $\overline{a}<1/3$ and $P$ satisfies $(\overline{a},\overline{c})$-expansion for $\overline{c}>3$. We express our bounds in terms of $c \triangleq \min(1/\overline{a},\overline{c})$.
\end{assumption}

With these assumptions and let $\delta \in [0,\frac1c]$, we can compare the performance of $G$ and $G_{pl}$ (i.e. the performance with and without semi-supervised learning) with the following theorem:
\begin{adxtheorem} [Error bound with semi-supervised learning]
\label{thm:main2}
Suppose Assumption~\ref{thm:separation} and \ref{thm:expansion} holds and let $\hat{G}$ be a minimizer of the following loss function:
\begin{align*}
    \hat{G} = \argmin_G L(G) \triangleq \frac{c+3}{c-1} L_{\mathsf{m}} (G,G_{pl})+ \frac{2c+2}{c-1} R_{\mathcal{B}}^\mathsf{m}(G;\delta) - \text{Err}(G_{pl})
\end{align*}
\vspace{-0.5em}
Then we can upper bound the error of $\hat{G}$ by:
\begin{align*}
    \text{Err}_\mathsf{m}(\hat{G}) \leq \frac{4}{c-1} \text{Err}(G_{pl}) + \frac{2c+2}{c-1} \mu
\end{align*}
\end{adxtheorem}
\vspace{-1.0em}
In addition, we can bound the certified robustness radius w.r.t. the Lipschitz constant as below:
\begin{adxtheorem} [Certified radius with semi-supervised learning]
\label{thm:main}
Suppose Assumption~\ref{thm:expansion} holds. Then the certified radius of a model $G$ can be bounded by:
\begin{align*}
    \text{CertR}(G) \geq \frac{0.5-\frac{c+3}{c-1} L_{\mathsf{m}} (G,G_{pl}) - \frac{2c+2}{c-1} R_{\mathcal{B}}^\mathsf{m}(G;\delta) + \text{Err}(G_{pl})}{\text{Lip}(G)}.
\end{align*}
\end{adxtheorem}
The proofs of Theorem~\ref{thm:main2} and \ref{thm:main} are shown in Appendix~\ref{sec:app-proof}. 

\paragraph{Discussion} From Theorem~\ref{thm:main2}, we observe that the benefits of semi-supervised learning depend on the error of $G_{pl}$ and data property. If the data expands and separates well and $G_{pl}$ is accurate,  $c$ is large and $\mu$ is small, which means $\text{Err}_\mathsf{m}(\hat{G})$ will be smaller than $\text{Err}(G_{pl})$. Note that $\mu$ is usually small and we assume $c>3$, so $\text{Err}_\mathsf{m}(\hat{G}) < \text{Err}(G_{pl})$ will hold true in most cases, indicating that semi-supervised learning helps improve the model performance. From Theorem~\ref{thm:main}, we can observe that given the data distribution and $G_{pl}$, the certified radius of the student model $\text{CertR}(G)$ can be lower-bounded by 1) how well $G$ learns from $G_{pl}$ ($L_\mathsf{m}(G,G_{pl})$); 2) how consistent $G$ is ($R_{\mathcal{B}}^\mathsf{m}(G;\delta)$). These two factors also correspond to the losses in the actual training process - the cross-entropy loss measures how well $G$ learns from $G_{pl}$ and the Lipschitz constraint measures local consistency for a Lipschitz-bounded model\footnote{This means that, with smaller Lipschitz, the model is more likely to be consistent within neighbourhood. In addition, the CReg loss will also improve local consistency by maximizing the prediction gap.}.

\section{Evaluation}
In this section, we will evaluate our method \name on different datasets and models compared with the state-of-the-art baselines based on the deterministic $\ell_2$ certified robustness. We show that on both CIFAR-10 and CIFAR-100, \name achieves significantly higher certified accuracy under different radii and settings with more efficient evaluation time.
In the meantime, we conduct a series of ablation studies to analyze the representation power of our \name, the error control of Newton's iterations, and the effects of residual connections. We show that under the semi-supervised setting, the certified accuracy of \name is further improved and outperforms the baselines.

\subsection{Experiment Setup}
\label{sec:exp-setting}

\paragraph{Baseline - SOC with CReg loss, HH activation and LLN technique} In the evaluation, we will mainly compare with SOC~\cite{singla2021skew}, as SOC has been shown to outperform other 1-Lipschitz models in terms of certified robustness. SOC  parameterizes an orthogonal layer with $W=\exp (V-V^\intercal)$ where $V$ is the convolution kernel and the transpose and exponential are with respect to the convolution operation.
Following their setting, we will focus on the CIFAR-10 and CIFAR-100 datasets and the model architecture varies from LipConvnet-5 to LipConvnet-40.

\cite{singla2021improved} proposes three techniques to improve SOC networks
. First, they observe that adding a Certificate Regularization (CReg) loss can help with the certified performance at larger radius: $\ell_{CReg}(x,y) = - \gamma \text{ReLU}\big( (f(x)_y - \max_{i\neq y}f(x)_i) / \sqrt{2} \big)$. Second, they propose a Householder (HH) activation layer, which is a generalization of the standard GroupSort activation. Finally, for tasks with large number of classes (e.g. CIFAR-100), they propose Last Layer Normalization(LLN) to only normalize the final output layer instead of orthogonalize. Since these techniques are agnostic to the type of convolution operation, we will further integrate them with \name for comparison.

\paragraph{Implementation Details} We will train and evaluate the \name network under supervised and unsupervised scenarios. Following previous works~\cite{singla2021skew}, we focus on the CIFAR-10 and CIFAR-100 datasets and the model architecture varies from LipConvnet-5 to LipConvnet-40. In semi-supervised learning, we use the 500K data introduced in \cite{carmon2019unlabeled} as the unlabelled dataset. To train the \name network, we will train the model for 200 epochs using a momentum SGD optimizer with an initial learning rate 0.1 and decay by 0.1 at the 100-th and 150-th epochs. We use Newton's iteration with 10 steps which we observe is enough for convergence (see \Cref{sec:app-err-control}).
When CReg loss is applied, we use $\gamma=0.5$; when HH activation is applied, we use the version of order 1. We add the residual connection with a fixed $\lambda=0.5$ for \name; for SOC, we use their original version, as we observe that residual connections even hurt their performance (see discussions in Section~\ref{sec:exp-abl}). We show the certified accuracy at radius $\rho\in \{\frac{36}{255}, \frac{72}{255}, \frac{108}{255}\}$. For the evaluation time comparison, we  show the runtime taken to do a full pass on the testing set evaluated on an NVIDIA RTX A6000 GPU.

\vspace{-3mm}
\subsection{Supervised Scenario}
\label{sec:exp-sup} \vspace{-0.5em}

\begin{table}[t]
    \centering
     \caption{\small Certified accuracy of 1-Lipschitz model with CReg loss and HH activation on CIFAR-10/100 in supervised setting. The LLN technique is applied on the CIFAR-100 dataset.}
     \resizebox{\textwidth}{!}{

    \begin{tabular}{l l|| c | c c c || c | c c c || c}
        \toprule
        \multirow{3}{*}{\bf Model} & \multirow{3}{*}{\bf\shortstack{Conv.\\ Type}} & \multicolumn{4}{c||}{CIFAR-10} & \multicolumn{4}{c||}{CIFAR-100} & \bf Mean \\
        \cline{3-10}
        & & \bf Vanilla & \multicolumn{3}{c||}{\bf Certified Accuracy at $\rho=$} & \bf Vanilla & \multicolumn{3}{c||}{\bf Certified Accuracy at $\rho=$} & \bf Evaluation \\
        & & \bf Accuracy & 36/255 & 72/255 & 108/255 & \bf Accuracy & 36/255 & 72/255 & 108/255 & \bf Time (sec) \\
        \midrule
        \multirow{2}{*}{\shortstack{LipConvnet-5}} & SOC & 75.31\% & 60.37\% & 45.62\% & 32.38\% & 45.82\% & 32.99\% & 22.48\% & 14.79\%  & 2.285 \\
        & \ot &\bf  76.34\% & \bf 62.07\% & \bf 47.52\% & \bf 33.99\% & \bf 48.38\% & \bf 34.77\% & \bf 23.38\% & \bf 15.44\% & \bf 1.411 \\
        \hline
        \multirow{2}{*}{\shortstack{LipConvnet-10}} & SOC & 76.23\% & 62.57\% & 47.70\% & 34.15\% & 47.07\% & 34.53\% & 23.50\% & 15.66\% & 3.342  \\
        & \ot & \bf 76.50\% & \bf 63.12\% & \bf 48.59\% & \bf 35.65\% & \bf 48.70\% & \bf 34.82\% & \bf 23.86\% & \bf 15.93\% & \bf 1.563  \\
        \hline
        \multirow{2}{*}{\shortstack{LipConvnet-15}} & SOC & 76.39\% & 62.96\% & 48.47\% & 35.47\% & 47.61\% & 34.54\% & 23.16\% & 15.09\% & 4.105 \\
        & \ot & \bf 76.86\% & \bf 63.84\% & \bf 49.18\% & \bf 36.35\% & \bf 48.99\% & \bf 34.90\% & \bf 24.39\% & \bf 16.37\% & \bf 1.627 \\
        \hline
        \multirow{2}{*}{\shortstack{LipConvnet-20}} & SOC & 76.34\% & 62.63\% & 48.69\% & 36.04\% & 47.84\% & 34.77\% & 23.70\% & 15.84\% & 5.142 \\
        & \ot & \bf 77.12\% & \bf 64.30\% & \bf 49.49\% & \bf 36.34\% & \bf 48.81\% & \bf 35.21\% & \bf 24.37\% & \bf 16.23\% & \bf 1.962 \\
        \hline
        \multirow{2}{*}{\shortstack{LipConvnet-25}} & SOC & 75.21\% & 61.98\% & 47.93\% & 34.92\% & 46.87\% & 34.09\% & 23.41\% & 15.61\% & 6.087 \\
        & \ot & \bf 76.83\% & \bf 64.49\% & \bf 49.80\% & \bf 37.32\% & \bf 48.93\% & \bf 35.23\% & \bf 24.33\% & \bf 16.59\% & \bf 2.387 \\
        \hline
        \multirow{2}{*}{\shortstack{LipConvnet-30}} & SOC & 74.23\% & 60.64\% & 46.51\% & 34.08\% & 46.92\% & 34.17\% & 23.21\% & 15.84\% & 6.927 \\
        & \ot & \bf 77.12\% & \bf 64.36\% & \bf 49.98\% & \bf 37.30\% & \bf 49.18\% & \bf 35.54\% & \bf 24.24\% & \bf 16.48\% & \bf 2.755 \\
        \hline
        \multirow{2}{*}{\shortstack{LipConvnet-35}} & SOC & 74.25\% & 61.30\% & 47.60\% & 35.21\% & 46.88\% & 33.64\% & 23.34\% & 15.73\% & 7.870 \\
        & \ot & \bf 76.91\% & \bf 63.55\% & \bf 49.05\% & \bf 36.19\% & \bf 48.25\% & \bf 34.99\% & \bf 24.13\% & \bf 16.25\% & \bf 3.193 \\
        \hline
        \multirow{2}{*}{\shortstack{LipConvnet-40}} & SOC & 72.59\% & 59.04\% & 44.92\% & 32.87\% & 45.03\% & 32.57\% & 22.37\% & 14.76\% & 8.668 \\
        & \ot & \bf 76.75\% & \bf 63.50\% & \bf 49.07\% & \bf 36.06\% & \bf 48.31\% & \bf 34.59\% & \bf 23.70\% & \bf 15.94\% & \bf 3.595 \\
		\bottomrule
    \end{tabular}
    }
    \label{tab:sup-cifar10-crhh}
    \label{tab:sup-cifar100-crhh}
    \vspace{-1.1em}
\end{table}

We show the results of supervised learning on CIFAR-10 and CIFAR-100 in Table~\ref{tab:sup-cifar10-crhh}. We can observe that  \name  significantly outperforms baselines on both vanilla accuracy and certified accuracy for different datasets. In particular, we observe that the improvement is more significant for deeper models. We owe it to the reason that our \name layer has a better expressiveness and unbiased parametrization. 
``Mean evaluation time'' column in Table~\ref{tab:sup-cifar10-crhh} records the evaluation time per instance averaged between CIFAR-10 and CIFAR-100 models. Since architectures of CIFAR-10 and CIFAR-100 models differ only in the last linear year, the evaluation time is almost the same.
We observe that \name has a better efficiency during the evaluation stage, since \name pre-calculates Newton's iteration while SOC calculates the exponential in each iteration.
A similar conclusion can be observed from results without CReg loss, HH activation and LLN, as shown in Table~\ref{tab:sup-cifar10} and \ref{tab:sup-cifar100} in Appendix~\ref{sec:app-cifar100}.

\begin{table}[t]
    \centering
    \caption{Performance comparison of 1-Lipschitz networks with and without residual connections.}
    \label{tab:abl-residual}
    \begin{tabular}{l l c | c | c c c}
        \toprule
        \multirow{2}{*}{\bf Model} & \bf Conv. & \bf Residual & \bf Vanilla & \multicolumn{3}{c}{\bf Certified Accuracy at $\rho=$} \\
        & \bf Type & \bf Connection & \bf Accuracy & 36/255 & 72/255 & 108/255 \\
        \midrule
        \multirow{4}{*}{LipConvnet-20} & \multirow{2}{*}{SOC} & $\times$ & 76.90\% & 61.87\% & 45.79\% & 31.08\% \\
        & & $\checkmark$ & 76.97\% & 61.76\% & 45.46\% & 30.29\% \\
        \cmidrule{2-7}
        & \multirow{2}{*}{\ot} & $\times$ & 77.16\% & 62.14\% & 45.78\% & 30.95\% \\
        & & \checkmark & \bf 77.86\% & \bf 63.54\% & \bf 47.15\% & \bf 32.12\% \\
		\bottomrule
    \end{tabular}
    \vspace{-1em}
\end{table}



\subsection{Ablation Studies}
\label{sec:exp-abl} \vspace{-1em}
In this section, we will perform several ablation studies for \name. Unless specified, we will use the deep model LipConvnet-20 for evaluation. We use the model without CReg loss and HH activation so that we can see the comparison under the standard CNN setting.

\paragraph{Representation Analysis}
Recent works suggest that adversarially robust models will have a good regularization-free feature visualization~\cite{allen2022feature}. We show the visualization of 4 neurons in the last hidden layer of LipConvNet-20 for both SOC and \name in Figure~\ref{fig:visualize}. More figures are shown in Appendix~\ref{sec:app-visualize} The visualization process is as follows: given a chosen neuron, we will start from a random image and take 500 gradient steps to maximize the neuron value with step size 1.0 and decay factor 0.1. We can see that  \name  indeed generates  more meaningful features. We attribute it to the good expressiveness power of our model. This indicates that our approach indeed leads to better representations.
\begin{figure}[t]
    \centering
    \includegraphics[trim={0 7.6cm 0 0},clip, width=0.4\textwidth]{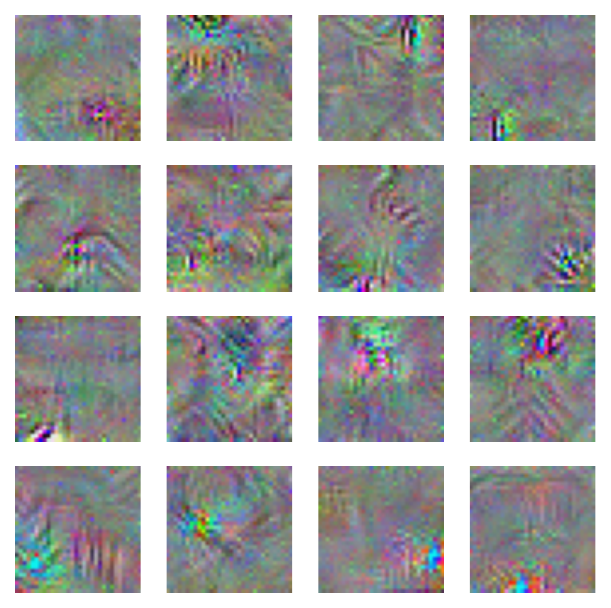}
    \hspace{0.2in}
    \includegraphics[trim={0 7.6cm 0 0},clip,width=0.4\textwidth]{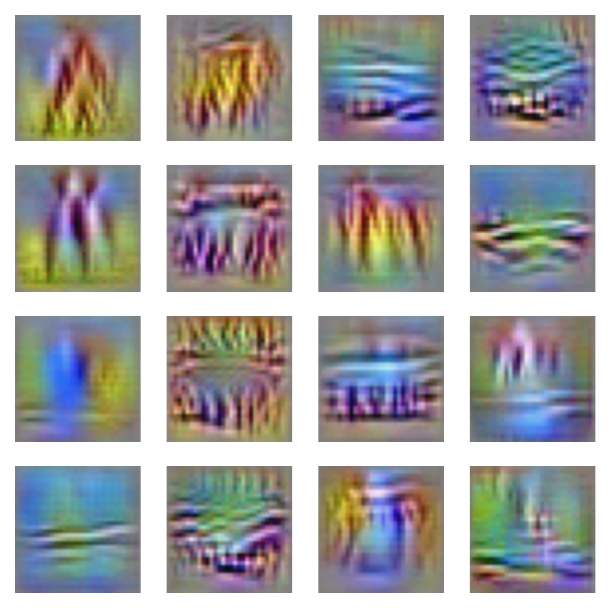}
    \vspace{-1em}
    \caption{\small Visualizing the features in the last hidden layer of LipConvnet-20 for SOC (left) and \name (right). Each image corresponds to one randomly chosen neuron from the last hidden layer and is optimized to maximize the value of the neuron.}
    \label{fig:visualize}
    \vspace{-1.8em}
\end{figure}

\paragraph{Error Control of Newton's Iterations} To see how our Newton's iteration approximates the inverse square root, we visualize the maximum singular value ($\sigma_{max}$) of \name layers in Appendix~\ref{sec:app-err-control}. We observe that all the resulting layers have a strictly $< 1$ Lipschitz. Therefore, we can safely conclude that the overall \name network is 1-Lipschitz.

\paragraph{Effect of Residual Connections}

\begin{wrapfigure}{l}{0.45\textwidth}
\begin{center}
    \captionsetup[subfigure]{justification=centering}
    \begin{subfigure}[b]{0.22\textwidth}
         \centering
         \includegraphics[width=\textwidth]{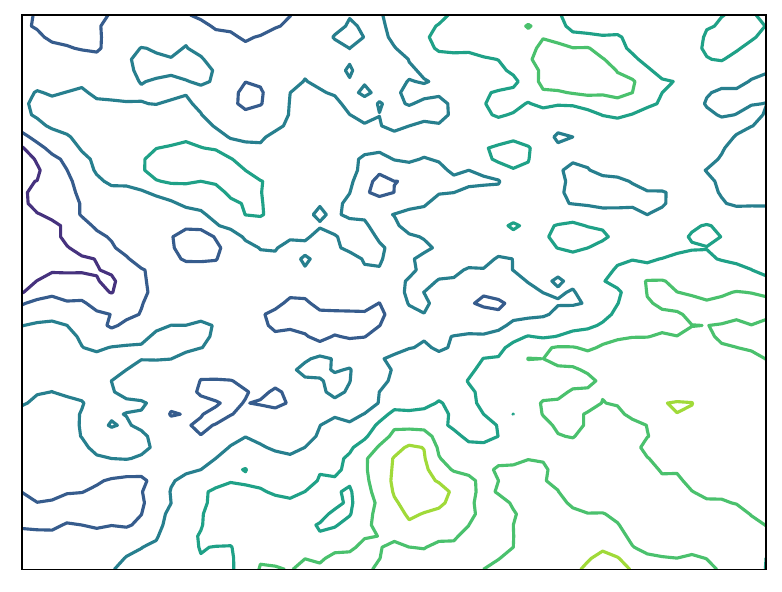}
         \caption{\name, without \\residual connection.}
         \label{fig:contour-ot-no-res}
     \end{subfigure}
    \begin{subfigure}[b]{0.22\textwidth}
         \centering
         \includegraphics[width=\textwidth]{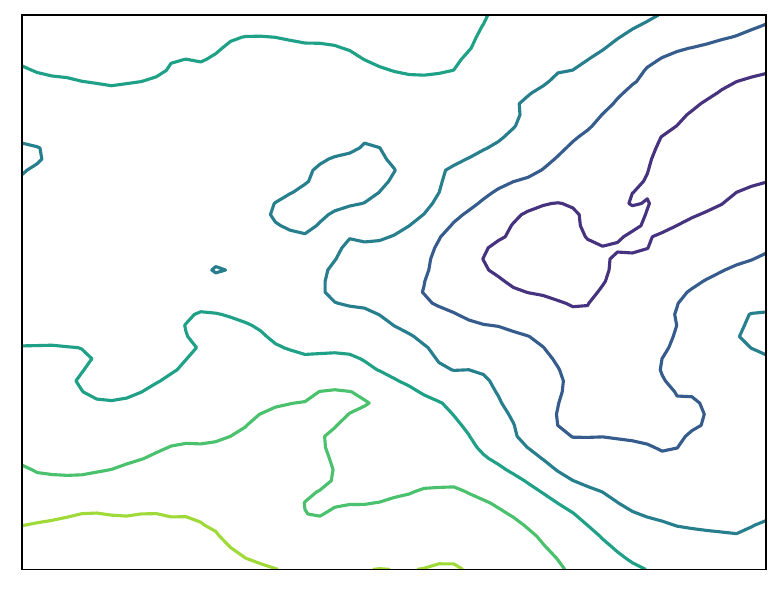}
         \caption{\name, with \\residual connection.}
         \label{fig:contour-ot-res}
     \end{subfigure}
     \caption{\small The loss landscape~\cite{li2018visualizing} with respect to the parameters of \name with and without residual connections. The figure is plotted by calculating the loss contour on two randomly chosen directions of parameters.}
    \label{fig:contour}
\end{center}
\vspace{-2em}
\end{wrapfigure}
Empirically, we observe that \name is non-smooth and therefore we need the residual connection to smoothify the model. To verify this phenomenon, we adopt the visualization approach in \cite{li2018visualizing} and show the loss surface in Figure~\ref{fig:contour}. We can see that \name without residual connection is highly non-smooth, while after adding residual connections the model becomes much smoother. By comparison, we observe that SOC networks with and without residual connection are similar (see Appendix~\ref{sec:app-residual}). We hypothesize the reason to be that the exponential parametrization $W=\exp(A)$ enables residual property implicitly, i.e.,
\begin{align*}
    \exp(A) \circ X = \underline{X + A \circ X} + \frac12 A \circ A \circ X + \ldots
\end{align*}

where the first two terms are essentially the format of residual connection. We show the results of models with and without residual connections in Table~\ref{tab:abl-residual}. We can observe that our \name indeed improves with the residual connection, while the performance remains similar or even worse for SOC. Therefore, in the evaluation, we use the \name with residual connection and SOC without it.

{
\paragraph{Robustness against $\ell_\infty$ Empirical Attack}
To evaluate the robustness of our model against attacks in other $\ell_p$ norms, we evaluate the model against standard $\ell_\infty$ PGD attack\cite{madry2017towards} with $\epsilon=8/255$ and 50 steps. We show the results on CIFAR-10 with CReg loss and HH activation in Table~\ref{tab:emp-cifar10}. We can observe that we still achieve better empirical robustness on the LipConvNet compared with SOC.
}

\begin{table}[tbp]
    \centering
    \caption{Empirical robust accuracy of 1-Lipschitz model with CReg loss and HH activation on CIFAR-10 against $\ell_\infty$-PGD attack at $\epsilon=8/255$. }
    \label{tab:emp-cifar10}
    \begin{tabular}{l l| c c}
        \toprule
        \bf Model & \bf Conv. & \bf Vanilla accuracy & \bf Robust accuracy\\
        \midrule
        \multirow{2}{*}{LipConvnet-5} & SOC & 75.31\% & 27.18\% \\
        & \ot & \bf 76.34\% & \bf 27.19\% \\
        \midrule
        \multirow{2}{*}{LipConvnet-10} & SOC & 76.23\% & 27.95\% \\
        & \ot & \bf 76.50\% & \bf 28.71\% \\
        \midrule
        \multirow{2}{*}{LipConvnet-15} & SOC & 76.39\% & 27.84\% \\
        & \ot & \bf 76.86\% & \bf 29.10\% \\
        \midrule
        \multirow{2}{*}{LipConvnet-20} & SOC & 76.34\% & 26.17\% \\
        & \ot & \bf 77.12\% & \bf 29.45\% \\
        \midrule
        \multirow{2}{*}{LipConvnet-25} & SOC & 75.21\% & 28.93\% \\
        & \ot & \bf 76.83\% & \bf 29.55\% \\
        \midrule
        \multirow{2}{*}{LipConvnet-30} & SOC & 74.23\% & 27.44\% \\
        & \ot & \bf 77.12\% & \bf 28.92\% \\
        \midrule
        \multirow{2}{*}{LipConvnet-35} & SOC & 74.25\% & 14.09\% \\
        & \ot & \bf 76.91\% & \bf 29.32\% \\
        \midrule
        \multirow{2}{*}{LipConvnet-40} & SOC & 72.59\% & 11.68\% \\
        & \ot & \bf 76.75\% & \bf 28.67\% \\
		\bottomrule
    \end{tabular}
\end{table}


\subsection{Semi-supervised Learning Scenario}

\begin{table}[t]
    \centering
    \caption{Performance of 1-Lipschitz networks on the CIFAR-10 dataset with semi-supervised training. Results of different architectures are shown in Table~\ref{tab:ssl-full-cifar10-crhh} in Appendix~\ref{sec:app-semi-sup}.}
    \label{tab:ssl-cifar10-crhh}
    \begin{tabular}{l l l | c | c c c}
        \toprule
        \multirow{2}{*}{\bf Model} & \bf Conv. & \bf \multirow{2}{*}{\bf Setting} & \bf Vanilla & \multicolumn{3}{c}{\bf Certified Accuracy at $\rho=$} \\
        & \bf Type & & \bf Accuracy & 36/255 & 72/255 & 108/255 \\
        \midrule
        \multirow{4}{*}{LipConvnet-20} & \multirow{2}{*}{SOC} & Supervised  & 76.34\% & 62.63\% & 48.69\% & 36.04\% \\
        & & Semi-supervised & 70.95\% & 61.72\% & 51.78\% & 42.01\% \\
        \cmidrule{2-7}
        & \multirow{2}{*}{\ot} & Supervised & \bf 77.12\% & \bf 64.30\% & 49.49\% & 36.34\%  \\
        & & Semi-supervised & 71.86\% & 62.86\% & \bf 52.24\% & \bf 42.39\% \\
		\bottomrule
    \end{tabular}
    \vspace{-1em}
\end{table}

We show the results of semi-supervised learning for one architecture with CReg loss and HH activation in Table~\ref{tab:ssl-cifar10-crhh}. The full results are shown in Table~\ref{tab:ssl-full-cifar10-crhh} and \ref{tab:ssl-full-cifar10} in Appendix~\ref{sec:app-semi-sup}. We can observe that, with semi-supervised learning, the vanilla accuracy will drop slightly, which will impact the certified accuracy at a small radius ($\rho = 36/255$). However, the certified accuracy at a large radius will be improved, and the gap is more significant at a larger radius (e.g. $\rho = 108/255$). We improve the previous state-of-the-art certified accuracy by over 6\% at $\rho=108/255$. This indeed shows that semi-supervised learning can help improve certified robustness. We empirically observe that semi-supervised learning does not help much on CIFAR-100. We owe it to the reason that the vanilla accuracy of the teacher model on CIFAR-100 is low (less than 50\%). Thus, according to Assumption~\ref{thm:expansion} and Theorem~\ref{thm:main}, when $c$ is low, the certified accuracy would be low.



\vspace{-1em}
\section{Conclusion}
\vspace{-1em}
In this work, we propose an orthogonal convolution layer \name and build a 1-Lipschitz convolution network. We show that \name network outperforms the previous state-of-the-art in certified robustness. We also show that semi-supervised learning can further help with the robustness both theoretically and experimentally.

\vspace{-1em}
\section*{Acknowledgements}
\vspace{-1em}
This work is partially supported by the NSF grant No.1910100,
NSF CNS No.2046726, C3 AI, and the Alfred P. Sloan Foundation.

\bibliographystyle{plain}
\bibliography{bib}


\newpage
\appendix

\section{Broader Impact}
\label{sec:broad-imp}
Adversarial attacks have been investigated a lot and people are worried that the vulnerability of machine learning models may affect their application in real world. This paper, as a potential counter-measure against the adversarial attacks, proposes a type of model architecture that can guarantee the model robustness under certain adversarial attacks. This could help to secure the safe deployment of ML models in real applications, thus promoting the development of various ML systems. In addition, better understanding of the Lipschitz property of machine learning models may help people understand and explain how the models work, which is a key concern when applying these algorithms in practice. On the other hand, the misuse of our technique may also lead to negative impact. For example, if an attacker is fully familiar with this work, he/she may discover some specific attack to fool this model (beyond $\ell_2$ attacks which we can certify against). Therefore, in real applications, it is recommended that different defense techniques are applied together to secure the model safety.

\section{Pseudocode of \name Layer}
\label{sec:app-alg}
We show the detailed pseudocode of our \name layer in \Cref{alg:ot-train}.
\begin{algorithm}[H]
\caption{\name layer.}
\label{alg:ot-train}
\begin{algorithmic}[1]
\REQUIRE Unconstrained convolution kernel $V\in \mathbb{R}^{c_{out}\times c_{in} \times k \times k}$; Input tensor $X \in \mathbb{R}^{c_{in} \times w \times w}$. 
\STATE $X^{pad} = \text{zero\_pad}(X, (k,k,k,k)) \in \mathbb{R}^{c_{in} \times (w+2k) \times (w+2k)}$. 
\STATE $V^{pad} = \text{zero\_pad}(W, (0,0,k+w,k+w)) \in \mathbb{R}^{c_{out}\times c_{in} \times (w+2k) \times (w+2k)}$. 
\STATE // Calculate the Fourier transformation:
\FORALL{$i \in \{1,\ldots, c_{in}\}$}
	\STATE $\tilde{X}_{i} = \fft(X_{i}^{pad}).$
	\FORALL {$j \in \{1, \ldots, c_{out}\}$}
	    \STATE $\tilde{V}_{j,i} = \fft(V_{j,i}^{pad}).$
	\ENDFOR
\ENDFOR
\STATE // Calculate the output on frequency domain:
\FORALL{$a,b \in \{1,\ldots, w+2k\}$}
    \STATE $\hat{V} = \frac{\tilde{V}_{:,:,a,b}}{\sqrt{|| \tilde{V}_{:,:,a,b} \tilde{V}_{:,:,a,b}^\intercal ||_F}}$ \quad // Rescale $\tilde{V}$.
    \STATE Calculate $\tilde{W}_{:,:,a,b} = (\hat{V} \hat{V}^*)^{-\frac12} \hat{V}$ with Newton's iteration.
    \STATE $\tilde{Y}_{:,a,b} = \tilde{W}_{:,:,a,b} \tilde{X}_{:,a,b}$.
\ENDFOR
\STATE // Get the final output:
\FORALL{$i \in \{1,\ldots, c_{out}\}$}
	\STATE $Y_{i} = \fft^{-1}(\tilde{Y}_{i}).$
\ENDFOR
\RETURN $(Y_{:,k:w+k,k:w+k}).real$
\end{algorithmic}
\end{algorithm}

\section{Proof of \Cref{thm:main2} and \Cref{thm:main}}
\label{sec:app-proof}

\subsection{Proof}
To prove \Cref{thm:main2} and \Cref{thm:main}, we first define the loss on a subset, so that we can divide the loss into different subset losses.
\begin{adxdefinition} [Subset loss]
Define conditional loss:
\begin{align*}
    L_\mathsf{m}^{cond}(G,G^*|S)=E_{x\in S}[G^*(x)(1-G(x)) + (1-G^*(x))G(x)]
\end{align*}
to bethe loss function calculated only over the subset $S\subseteq \mathcal{X}$, and define the subset loss:
\begin{align*}
    L_\mathsf{m}(G,G^*|S) = L_\mathsf{m}^{cond}(G,G^*|S) \cdot P(S).
\end{align*}
A property of the subset loss is that, if $S=S_1\cup S_2$ where $S_1$ and $S_2$ are disjoint, then $L_\mathsf{m}(G,G^*|S)=L_\mathsf{m}(G,G^*|S_1)+L_\mathsf{m}(G,G^*|S_2)$.
\end{adxdefinition}

First, we provide some facts on the relationship between subset loss $L_\mathsf{m}(G,G^*|S)$ and $L_\mathsf{m}(G,G_{pl}|S)$ when $S \subseteq \mathcal{M}(G_{pl})$ or $S \subseteq \overline{\mathcal{M}(G_{pl})}$.
\begin{fact}
When $S \subseteq \overline{\mathcal{M}(G_{pl})}$, then $L_\mathsf{m}(G,G^*|S) = L_\mathsf{m}(G,G_{pl}|S)$; when $S \subseteq \mathcal{M}(G_{pl})$, then $L_\mathsf{m}(G,G^*|S) + L_\mathsf{m}(G,G_{pl}|S) = P(S)$.
\end{fact}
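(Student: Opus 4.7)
Both parts follow by unwinding the definition $L_\mathsf{m}^{cond}(G,H|S)=\mathbb{E}_{x\in S}[H(x)(1-G(x))+(1-H(x))G(x)]$ and exploiting the fact that the pseudolabeller $G_{pl}$ and the ground-truth $G^*$ take values in $\{0,1\}$.

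For the first part, I would observe that by the definition $\mathcal{M}(G_{pl})=\{x:G_{pl}(x)\neq G^*(x)\}$, if $S\subseteq\overline{\mathcal{M}(G_{pl})}$ then $G_{pl}(x)=G^*(x)$ for every $x\in S$. Substituting this identity into $L_\mathsf{m}^{cond}(G,G^*|S)$ gives exactly $L_\mathsf{m}^{cond}(G,G_{pl}|S)$, and multiplying both sides by $P(S)$ yields $L_\mathsf{m}(G,G^*|S)=L_\mathsf{m}(G,G_{pl}|S)$.

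For the second part, I would use that on $S\subseteq\mathcal{M}(G_{pl})$ we have $G_{pl}(x)\neq G^*(x)$ pointwise, and since both take values in $\{0,1\}$ this forces $G_{pl}(x)=1-G^*(x)$. Summing the two conditional losses and collecting terms,
\begin{align*}
L_\mathsf{m}^{cond}(G,G^*|S)+L_\mathsf{m}^{cond}(G,G_{pl}|S)
&=\mathbb{E}_{x\in S}\bigl[G^*(x)(1-G(x))+(1-G^*(x))G(x)\\
&\qquad\qquad+(1-G^*(x))(1-G(x))+G^*(x)G(x)\bigr]\\
&=\mathbb{E}_{x\in S}\bigl[(1-G(x))+G(x)\bigr]=1,
\end{align*}
where the second equality regroups the four terms by factoring out $1-G(x)$ and $G(x)$ and using $G^*(x)+(1-G^*(x))=1$. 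Multiplying through by $P(S)$ then gives $L_\mathsf{m}(G,G^*|S)+L_\mathsf{m}(G,G_{pl}|S)=P(S)$, as desired.

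Essentially there is no obstacle here: the fact is a bookkeeping identity whose only content is that pointwise the two losses either coincide (when $G^*=G_{pl}$) or sum to $1$ (when $G^*=1-G_{pl}$, which happens exactly on $\mathcal{M}(G_{pl})$). The only subtlety worth flagging in the write-up is making clear that $G^*$ and $G_{pl}$ are binary-valued while $G$ is allowed to lie in $[0,1]$, so that the cancellation in the second computation does not require any assumption on $G$.
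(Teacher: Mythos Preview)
Your proof is correct and follows exactly the same approach as the paper: the paper's own proof simply notes that on $\overline{\mathcal{M}(G_{pl})}$ we have $G_{pl}(x)=G^*(x)$, while on $\mathcal{M}(G_{pl})$ we have $G_{pl}(x)+G^*(x)=1$, which are precisely the pointwise identities you exploit. Your write-up is more detailed than the paper's one-line justification, but the content is identical.
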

\begin{proof}
This is easy to see by noticing that when $x \in \overline{\mathcal{M}(G_{pl})}$, then $G_{pl}(x) = G^*(x)$; when $x \in \mathcal{M}(G_{pl})$, then $G_{pl}(x) \neq G^*(x)$, so $G_{pl}(x) + G^*(x) = 1$.
\end{proof}
Now, we define $S_1 = S_B^\mathsf{m}(G) \cap \mathcal{M}(G_{pl})$ and $S_2 = S_B^\mathsf{m}(G) \cap \overline{\mathcal{M}(G_{pl})}$. Base on the fact, we will be able to derive the relationship between $L_\mathsf{m}(G,G^*|S_1)$ and $L_\mathsf{m}(G,G^*|S_2)$:
\begin{lemma}
\label{lemma:relation-of-loss}
We have the following relationship between the losses on the sets in which $G_{pl}$ is correct vs. $G_{pl}$ is wrong:
\begin{align*}
    L_{\mathsf{m}}(G,G^*|S_2) + P(S_1) = L_{\mathsf{m}}(G,G^*|S_1) + L_{\mathsf{m}}(G,G_{pl}|S_B^\mathsf{m}(G))
\end{align*}
And therefore,
\begin{align*}
    L_{\mathsf{m}}(G,G^*|S_2) \leq L_{\mathsf{m}}(G,G^*|S_1) + L_{\mathsf{m}}(G,G_{pl}) - \text{Err}(G_{pl}) + R_B^\mathsf{m}(G)
\end{align*}
\end{lemma}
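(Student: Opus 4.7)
The plan is to exploit the two cases of the stated Fact (depending on whether $S \subseteq \overline{\mathcal{M}(G_{pl})}$ or $S \subseteq \mathcal{M}(G_{pl})$) applied separately to $S_2$ and $S_1$, and then use additivity of the subset loss over the disjoint decomposition $S_B^\mathsf{m}(G) = S_1 \sqcup S_2$.

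First I would observe, applying the Fact to $S_2 \subseteq \overline{\mathcal{M}(G_{pl})}$, that $L_\mathsf{m}(G,G^*|S_2) = L_\mathsf{m}(G,G_{pl}|S_2)$; and applying it to $S_1 \subseteq \mathcal{M}(G_{pl})$, that $L_\mathsf{m}(G,G_{pl}|S_1) = P(S_1) - L_\mathsf{m}(G,G^*|S_1)$. Since $S_1$ and $S_2$ are disjoint and their union is $S_B^\mathsf{m}(G)$, additivity of the subset loss gives
\begin{equation*}
L_\mathsf{m}(G,G_{pl}|S_B^\mathsf{m}(G)) = L_\mathsf{m}(G,G_{pl}|S_1) + L_\mathsf{m}(G,G_{pl}|S_2) = P(S_1) - L_\mathsf{m}(G,G^*|S_1) + L_\mathsf{m}(G,G^*|S_2),
\end{equation*}
and rearranging yields the claimed equality.

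For the inequality, I would then bound the two pieces on the right-hand side separately. The subset loss is clearly monotone under set inclusion, so $L_\mathsf{m}(G,G_{pl}|S_B^\mathsf{m}(G)) \leq L_\mathsf{m}(G,G_{pl})$. For $P(S_1)$, note that $S_1 = S_B^\mathsf{m}(G) \cap \mathcal{M}(G_{pl})$, so $\mathcal{M}(G_{pl}) \setminus S_1 \subseteq \overline{S_B^\mathsf{m}(G)}$, giving $P(S_1) \geq P(\mathcal{M}(G_{pl})) - P(\overline{S_B^\mathsf{m}(G)}) = \text{Err}(G_{pl}) - R_B^\mathsf{m}(G)$. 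Substituting both bounds into the equality and solving for $L_\mathsf{m}(G,G^*|S_2)$ gives the desired inequality.

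I do not anticipate a serious obstacle here: the Fact already does the nontrivial bookkeeping of replacing $G^*$ with $G_{pl}$ on regions where these agree or disagree, and the rest is only the two elementary bounds above. The only point requiring minor care is the direction of the inequality when bounding $P(S_1)$ from below (the subtraction from the equality flips it), but this is a one-line check.
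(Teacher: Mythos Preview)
Your proposal is correct and follows essentially the same approach as the paper: apply the Fact to $S_1$ and $S_2$, combine via additivity over the disjoint decomposition $S_B^\mathsf{m}(G) = S_1 \sqcup S_2$ to get the equality, then bound $L_\mathsf{m}(G,G_{pl}|S_B^\mathsf{m}(G)) \leq L_\mathsf{m}(G,G_{pl})$ and $P(S_1) \geq \text{Err}(G_{pl}) - R_B^\mathsf{m}(G)$ to get the inequality. The paper's writeup differs only cosmetically in the order of substitutions.
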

\begin{proof}
For the first equation, note that $S_2 \subseteq \overline{\mathcal{M}(G_{pl})}$ and $S_1 \subseteq \mathcal{M}(G_{pl})$, so based on the previous fact, we have:
\begin{align*}
    &L_{\mathsf{m}}(G,G^*|S_2) + P(S_1)\\
    =& L_{\mathsf{m}}(G,G^*|S_2) + L_{\mathsf{m}}(G,G_{pl}|S_1) + L_{\mathsf{m}}(G,G^*|S_1)\\
    =& L_{\mathsf{m}}(G,G_{pl}|S_2) + L_{\mathsf{m}}(G,G_{pl}|S_1) + L_{\mathsf{m}}(G,G^*|S_1)\\
    =& L_{\mathsf{m}}(G,G_{pl}|S_B^\mathsf{m}(G)) + L_{\mathsf{m}}(G,G^*|S_1)
\end{align*}

For the second inequation, notice $\mathcal{M}(G_{pl}) \backslash \overline{S_B^\mathsf{m}(G)} \subseteq S_1$, so $P(S_1) \geq P(\mathcal{M}(G_{pl})) - P(\overline{S_B^\mathsf{m}(G)}) = \text{Err}(G_{pl}) - R_B^\mathsf{m}(G)$. So:
\begin{align*}
    L_{\mathsf{m}}(G,G^*|S_2) =& L_{\mathsf{m}}(G,G^*|S_1) + L_{\mathsf{m}}(G,G_{pl}|S_B^\mathsf{m}(G)) - P(S_1) \\
    \leq & L_{\mathsf{m}}(G,G^*|S_1) + L_{\mathsf{m}}(G,G_{pl}) - (\text{Err}(G_{pl}) - R_B^\mathsf{m}(G)) \\
    = & L_{\mathsf{m}}(G,G^*|S_1) + L_{\mathsf{m}}(G,G_{pl}) - \text{Err}(G_{pl}) + R_B(G)
\end{align*}
\end{proof}

\begin{fact}
\label{fact:beta-ineq}
Given $\delta \in [0,\frac1c]$ and $\beta \in (0,\frac{c-1}2]$, we can verify that:
\begin{align*}
    \frac{\beta-1}{c(1-\delta)-2} \leq \frac{\beta}{c-1}
\end{align*}
\end{fact}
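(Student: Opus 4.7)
The plan is to reduce the claimed inequality to a simple algebraic comparison by cross-multiplication, after first ensuring both denominators are positive so that the direction of the inequality is preserved.

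First I would verify sign conditions on the denominators. Since the background assumption is $c > 3$ (from Assumption~\ref{thm:expansion}) and $\delta \in [0,1/c]$, we have $c(1-\delta) \geq c - 1 > 2$, so $c(1-\delta) - 2 > 0$, and of course $c - 1 > 0$. This lets me cross-multiply without flipping the inequality, turning the target into
\begin{align*}
(\beta - 1)(c - 1) \leq \beta\bigl(c(1-\delta) - 2\bigr).
\end{align*}

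Next I would expand both sides and collect terms. The left becomes $\beta c - \beta - c + 1$ and the right becomes $\beta c - \beta c \delta - 2\beta$. Subtracting $\beta c$ from both sides and rearranging reduces the inequality to
\begin{align*}
\beta(1 + c\delta) \leq c - 1.
\end{align*}
This is the clean form I aim to verify.

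Finally, I would close the argument using the two given bounds: from $\delta \leq 1/c$ we get $1 + c\delta \leq 2$, and from $\beta \leq (c-1)/2$ we get $\beta \cdot 2 \leq c - 1$. Multiplying these together yields $\beta(1 + c\delta) \leq 2\beta \leq c - 1$, completing the chain. There is no real obstacle here; the only thing to be careful about is confirming positivity of $c(1-\delta) - 2$ before cross-multiplying, since otherwise the direction of the inequality could reverse — but this is guaranteed by the standing assumption $c > 3$ together with $\delta \leq 1/c$.
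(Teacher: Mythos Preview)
Your proof is correct. The paper takes a slightly different route: it argues that the left-hand side is monotonically increasing in $\delta$, so it suffices to check the extreme case $\delta = 1/c$, where the inequality collapses to $\frac{\beta-1}{c-3} \le \frac{\beta}{c-1}$, which after cross-multiplying is equivalent to $2\beta \le c-1$. Your direct expand-and-rearrange approach lands on the same final inequality $\beta(1+c\delta)\le c-1$ and then bounds $1+c\delta\le 2$; this is essentially the same endgame but avoids the monotonicity step. One small advantage of your version is that it handles the case $\beta<1$ uniformly: the paper's monotonicity claim actually reverses when $\beta-1<0$ (the numerator is negative), so strictly speaking that case needs a separate (trivial) observation that the left side is then negative while the right side is positive. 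Your cross-multiplication argument sidesteps that subtlety entirely.
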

\begin{proof}
This can be verified by substituting $\delta = \frac{1}{c}$ into LHS, noticing that LHS is monotonically increasing w.r.t. $\delta$.
\end{proof}

\begin{lemma}
\label{lemma:main-bound}
For any $\beta\in (0,\frac{c-1}2]$, define $q=\frac{\beta}{c-1}\text{Err}(G_{pl})$ and $\alpha=(\beta-1)\text{Err}(G_{pl})$. If $G$ fits the pseudolabels with suffcient accuracy and consistency:
\begin{align*}
    L_{\mathsf{m}}(G,G_{pl}) + 2R_B^\mathsf{m}(G) \leq \text{Err}(G_{pl}) + \alpha
\end{align*}
Then $G$ satisfies the following error bound:
\begin{align*}
    \text{Err}_\mathsf{m}(G) \leq 2(q+R_B^\mathsf{m}(G)) + L_{\mathsf{m}}(G,G_{pl}) - \text{Err}(G_{pl})
\end{align*}
\end{lemma}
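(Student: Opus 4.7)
The plan is to decompose $\text{Err}_\mathsf{m}(G) = L_\mathsf{m}(G, G^*)$ over the partition of $\mathcal{X}$ into the three disjoint pieces $S_1 = S_B^\mathsf{m}(G) \cap \mathcal{M}(G_{pl})$, $S_2 = S_B^\mathsf{m}(G) \cap \overline{\mathcal{M}(G_{pl})}$, and $\overline{S_B^\mathsf{m}(G)}$. Writing
$$
\text{Err}_\mathsf{m}(G) = L_\mathsf{m}(G, G^*|S_1) + L_\mathsf{m}(G, G^*|S_2) + L_\mathsf{m}(G, G^*|\overline{S_B^\mathsf{m}(G)}),
$$
the last term is at most $P(\overline{S_B^\mathsf{m}(G)}) = R_B^\mathsf{m}(G)$ since the integrand lies in $[0,1]$, and the middle term is controlled by the second inequality of Lemma~\ref{lemma:relation-of-loss}. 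Summing gives
$$
\text{Err}_\mathsf{m}(G) \leq 2 L_\mathsf{m}(G, G^*|S_1) + L_\mathsf{m}(G, G_{pl}) - \text{Err}(G_{pl}) + 2 R_B^\mathsf{m}(G),
$$
so the lemma reduces to establishing the single inequality $L_\mathsf{m}(G, G^*|S_1) \leq q = \tfrac{\beta}{c-1}\text{Err}(G_{pl})$.

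This residual bound is the heart of the proof and is where the expansion assumption enters. On $S_1 \subseteq \mathcal{M}(G_{pl})$ the pseudolabel is flipped relative to $G^*$, so a large value of $L_\mathsf{m}(G, G^*|S_1)$ means $G$ is heavily biased toward $G_{pl}$ on $S_1$. I would argue by contradiction: suppose $L_\mathsf{m}(G, G^*|S_1) > q$. For each class $i$ the set $S_1 \cap \{G^* = i\}$ has $P_i$-mass at most $\overline{a} < 1/3$, so Assumption~\ref{thm:expansion} applies class-wise, giving $P_i(\mathcal{N}(V)) \geq c \cdot P_i(V)$ on any subset $V$ of $S_1$ where $G$ agrees with $G_{pl}$. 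Because $V \subseteq S_B^\mathsf{m}(G)$, marginal consistency forces $G(x')$ on $\mathcal{N}(V)$ to be at least $(1-\delta)$ times its value on $V$ (symmetrically for $1 - G$), so the ``agreement-with-$G_{pl}$'' bias transports to $\mathcal{N}(V)$. On $\mathcal{N}(V) \cap S_2$ that same bias becomes \emph{disagreement} with $G_{pl}$ (since there $G_{pl} = G^*$) and hence contributes to $L_\mathsf{m}(G, G_{pl}|S_2)$. Combining this with $L_\mathsf{m}(G, G_{pl}|\overline{S_B^\mathsf{m}(G)}) \leq R_B^\mathsf{m}(G)$ and carefully accounting for the mass that $\mathcal{N}(V)$ may overlap with $S_1$ itself yields a lower bound on $L_\mathsf{m}(G, G_{pl}) + 2 R_B^\mathsf{m}(G)$ that exceeds $\text{Err}(G_{pl}) + \alpha = \beta\, \text{Err}(G_{pl})$, contradicting the hypothesis.

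The main obstacle will be porting Wei--Ma's discrete expansion argument to the continuous score $G(x) \in [0,1]$: I must transport the continuous loss integrand, not just a $0/1$ indicator, from $V \subseteq S_1$ to $\mathcal{N}(V) \cap S_2$. I would attack this by either (i) a superlevel-set decomposition, writing $G(x) = \int_0^1 \mathbf{1}[G(x) > t]\, dt$, applying the discrete expansion to each superlevel set, and integrating in $t$, or (ii) directly invoking the marginal consistency inequalities $G(x)/G(x') \geq 1 - \delta$ and $(1 - G(x))/(1 - G(x')) \geq 1 - \delta$ to pass the integrand through the neighborhood pointwise. Fact~\ref{fact:beta-ineq} is what absorbs the multiplicative slack of $(1-\delta)$ lost in this transfer into the coefficient $\beta/(c-1)$ in the target $q$, while the constraint $\beta \leq (c-1)/2$ keeps $\alpha < \text{Err}(G_{pl})$ so the hypothesis is nonvacuous. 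Substituting $L_\mathsf{m}(G, G^*|S_1) \leq q$ into the reduced inequality then yields exactly the claimed bound $\text{Err}_\mathsf{m}(G) \leq 2(q + R_B^\mathsf{m}(G)) + L_\mathsf{m}(G, G_{pl}) - \text{Err}(G_{pl})$.
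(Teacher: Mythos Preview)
Your proposal is correct and follows essentially the same approach as the paper: decompose $\text{Err}_\mathsf{m}(G)$ over $S_1$, $S_2$, $\overline{S_B^\mathsf{m}(G)}$, control $S_2$ via Lemma~\ref{lemma:relation-of-loss}, and reduce to $L_\mathsf{m}(G,G^*|S_1)\le q$, which is obtained from expansion plus marginal consistency with Fact~\ref{fact:beta-ineq} absorbing the $(1-\delta)$ slack. The paper carries this out directly rather than by contradiction and applies expansion to $S_1$ as a whole (your option (ii), comparing conditional losses on $S_1$ and $\mathcal{N}(S_1)\cap S_B^\mathsf{m}(G)$), so the superlevel-set route is unnecessary.
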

The intuition of the proof is as follows. Lemma~\ref{lemma:relation-of-loss} provides a relationship between the loss $L_{\mathsf{m}}(G,G^*|S_1)$ and $L_{\mathsf{m}}(G,G^*|S_2)$. On the other hand, the expansion of $S_1$ is also related with $S_2$ by $(\mathcal{N}(S_1)\backslash S_1) \cap S_B^\mathsf{m}(G) \subseteq S_2$. Note that the expansion $P(\mathcal{N}(S_1)) > c \cdot P(S_1)$, so $S_1$ cannot be too large or otherwise $\mathcal{N}(S_1)\backslash S_1$ will be too large to be within $S_2$. We will show that $L_{\mathsf{m}}(G,G^*|S_1) < q$.
\begin{proof}
Consider the expansion of $S_1$, $\mathcal{N}(S_1)$. Since $S_1 \subseteq \mathcal{M}(G_{pl})$ and $P(G_{pl}) < 1/c$, we know that $c\cdot P(S_1) < 1$, so by the assumption of expansion, $P(\mathcal{N}(S_1)) \geq c\cdot P(S_1)$. In addition, notice that $(\mathcal{N}(S_1)\backslash S_1) \cap S_B^\mathsf{m}(G) \subseteq S_2$. Therefore, we have:
\begin{align*}
    L_{\mathsf{m}}(G,G^*|S_2) \geq & L_{\mathsf{m}}(G,G^*|(\mathcal{N}(S_1)\backslash S_1) \cap S_B^\mathsf{m}(G)) \\
    \geq & L_{\mathsf{m}}(G,G^*|\mathcal{N}(S_1) \cap S_B^\mathsf{m}(G)) - L_{\mathsf{m}}(G,G^*|S_1)
\end{align*}
For the first term, we notice that $\mathcal{N}(S_1) \cap S_B^\mathsf{m}(G) \subseteq \mathcal{N}(S_1)$ and $S_1 \subseteq S_B^\mathsf{m}(G)$, so the conditional loss satisfies $L(G,G^*|\mathcal{N}(S_1) \cap S_B^\mathsf{m}(G)) / P(\mathcal{N}(S_1) \cap S_B^\mathsf{m}(G)) \geq (1-\delta) L(G,G^*|S_1) / P(S_1)$. Therefore,
\begin{align*}
    L_{\mathsf{m}}(G,G^*|S_2) \geq & L_{\mathsf{m}}(G,G^*|\mathcal{N}(S_1) \cap S_B^\mathsf{m}(G)) - L_{\mathsf{m}}(G,G^*|S_1)\\
    \geq & (1-\delta)\cdot \frac{P(\mathcal{N}(S_1) \cap S_B^\mathsf{m}(G))}{P(S_1)} \cdot L_{\mathsf{m}}(G,G^*|S_1) - L_{\mathsf{m}}(G,G^*|S_1) \\
    \geq & (1-\delta)\cdot \frac{P(\mathcal{N}(S_1))- P(\overline{S_B^\mathsf{m}(G)})}{P(S_1)} \cdot L_{\mathsf{m}}(G,G^*|S_1) - L_{\mathsf{m}}(G,G^*|S_1)\\
    \geq & (1-\delta)\cdot \frac{P(\mathcal{N}(S_1))}{P(S_1)}\cdot L_{\mathsf{m}}(G,G^*|S_1) - \frac{L_{\mathsf{m}}(G,G^*|S_1)}{P(S_1)}\cdot P(\overline{S_B^\mathsf{m}(G)})  - L_{\mathsf{m}}(G,G^*|S_1) \\
    \geq & (1-\delta)c \cdot L(G,G^*|S_1) - P(\overline{S_B^\mathsf{m}(G)}) - L_{\mathsf{m}}(G,G^*|S_1) \\
    = & (c(1-\delta)-1) L_{\mathsf{m}}(G,G^*|S_1) - R_B^\mathsf{m}(G)
\end{align*}
Now, substituting $L(G,G^*|S_2)$ on the LHS with Lemma~\ref{lemma:relation-of-loss} and noticing $\mathcal{M}(G_{pl}) \backslash \overline{S_B^\mathsf{m}(G)} \subseteq S_1$, with simple transformation we have:
\begin{align*}
    (c(1-\delta)-2) L(G,G^*|S_1) \leq& L_{\mathsf{m}}(G,G_{pl}) - \text{Err}(G_{pl}) + 2*R_B^\mathsf{m}(G) \\
    \leq & \alpha
\end{align*}
The last inequality comes from the condition in the lemma. Thus, with Fact~\ref{fact:beta-ineq}, we know $L(G,G^*|S_1) \leq \alpha / ((c(1-\delta)-2)) \leq q$. Now, we can bound the overall error:
\begin{align*}
    \text{Err}_\mathsf{m}(G) =& L_{\mathsf{m}}(G,G^*|S_1) + L_{\mathsf{m}}(G,G^*|S_2) + L_{\mathsf{m}}(G,G^*|\overline{S_B^\mathsf{m}(G)}) \\
    \leq & q + (q + L_{\mathsf{m}}(G,G_{pl}) - \text{Err}(G_{pl}) + R_B^\mathsf{m}(G)) + R_B^\mathsf{m}(G) \\
    \leq & 2(q+R_B^\mathsf{m}(G)) + L_{\mathsf{m}}(G,G_{pl}) - \text{Err}(G_{pl})
\end{align*}
\end{proof}

Now, we will prove our main lemma, based on which we will be able to derive \Cref{thm:main2} and \Cref{thm:main}.
\begin{lemma}
\label{thm:main-lemma}
Suppose \Cref{thm:expansion} holds true. Then we can bound:
\begin{align*}
    \text{Err}_\mathsf{m}(G) \leq L(G) \triangleq \frac{c+3}{c-1} L_{m} (G,G_{pl})+ \frac{2c+2}{c-1} R_{\mathcal{B}}^m(G;\delta) - \text{Err}(G_{pl}).
\end{align*}
for any $\delta \in [0,\frac1c]$.
\end{lemma}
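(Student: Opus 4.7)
The plan is to reduce Lemma~\ref{thm:main-lemma} to Lemma~\ref{lemma:main-bound} by an adaptive choice of $\beta$, and to cover the complementary regime in which Lemma~\ref{lemma:main-bound}'s hypothesis is no longer satisfiable by a direct triangle-inequality bound. Throughout, I will use the fact that since $G^*$ and $G_{pl}$ are $\{0,1\}$-valued, $L_\mathsf{m}(G,G^*)$ reduces to $\mathbb{E}[|G(x)-G^*(x)|]$ (and similarly for $L_\mathsf{m}(G,G_{pl})$ and $\text{Err}(G_{pl})$), so $L_\mathsf{m}$ obeys the triangle inequality and in particular $\text{Err}_\mathsf{m}(G)\leq L_\mathsf{m}(G,G_{pl})+\text{Err}(G_{pl})$.

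First I would set $\beta^\star \triangleq (L_\mathsf{m}(G,G_{pl}) + 2 R_\mathcal{B}^\mathsf{m}(G;\delta))/\text{Err}(G_{pl})$ (the degenerate case $\text{Err}(G_{pl})=0$ follows immediately from the triangle inequality, since $\tfrac{c+3}{c-1}\geq 1$) and split according to whether $\beta^\star\leq \tfrac{c-1}{2}$. In \emph{Case 1} ($\beta^\star\leq \tfrac{c-1}{2}$), I would invoke Lemma~\ref{lemma:main-bound} with $\beta=\beta^\star$: its hypothesis $L_\mathsf{m}(G,G_{pl}) + 2R_\mathcal{B}^\mathsf{m}(G;\delta)\leq \beta\,\text{Err}(G_{pl})$ holds with equality by construction, and plugging $q=\tfrac{\beta^\star}{c-1}\text{Err}(G_{pl}) = \tfrac{L_\mathsf{m}(G,G_{pl})+2R_\mathcal{B}^\mathsf{m}(G;\delta)}{c-1}$ into its conclusion and collecting terms yields
\[
\text{Err}_\mathsf{m}(G)\leq \frac{c+1}{c-1}L_\mathsf{m}(G,G_{pl})+\frac{2c+2}{c-1}R_\mathcal{B}^\mathsf{m}(G;\delta)-\text{Err}(G_{pl}),
\]
which is at most $L(G)$ because $\tfrac{c+1}{c-1}\leq \tfrac{c+3}{c-1}$.

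In \emph{Case 2} ($\beta^\star> \tfrac{c-1}{2}$) Lemma~\ref{lemma:main-bound} is no longer applicable, so I would fall back on the triangle-inequality bound $\text{Err}_\mathsf{m}(G)\leq L_\mathsf{m}(G,G_{pl})+\text{Err}(G_{pl})$ and verify it is still dominated by $L(G)$. Rearranging, this reduces to
\[
(c-1)\text{Err}(G_{pl})\leq 2L_\mathsf{m}(G,G_{pl})+(c+1)R_\mathcal{B}^\mathsf{m}(G;\delta).
\]
The Case~2 assumption directly gives $(c-1)\text{Err}(G_{pl})< 2L_\mathsf{m}(G,G_{pl})+4R_\mathcal{B}^\mathsf{m}(G;\delta)$, and since Assumption~\ref{thm:expansion} forces $c>3$ (hence $c+1>4$), we have $(c+1)R_\mathcal{B}^\mathsf{m}(G;\delta)\geq 4R_\mathcal{B}^\mathsf{m}(G;\delta)$, which supplies the needed slack.

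The hard part will be spotting that the two regimes are complementary in exactly the right way: Case~1's bound carries an \emph{extra} $\tfrac{2}{c-1}L_\mathsf{m}(G,G_{pl})$ of slack relative to $L(G)$, while Case~2 consumes the doubled version of that slack ($\tfrac{4}{c-1}L_\mathsf{m}$ after rearrangement) precisely to pay for flipping the sign of $\text{Err}(G_{pl})$. Once $c>3$ has been used, the remaining steps are purely algebraic bookkeeping.
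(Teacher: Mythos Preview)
Your proposal is correct and follows essentially the same approach as the paper: the same case split on whether $L_\mathsf{m}(G,G_{pl})+2R_\mathcal{B}^\mathsf{m}(G;\delta)\le \tfrac{c-1}{2}\text{Err}(G_{pl})$, the same adaptive choice of $\beta$ to make Lemma~\ref{lemma:main-bound}'s hypothesis hold with equality in Case~1, and the same triangle-inequality fallback in Case~2 using $c>3$. Your explicit handling of the degenerate case $\text{Err}(G_{pl})=0$ and the justification of the triangle inequality via $L_\mathsf{m}(G,G^*)=\mathbb{E}[|G-G^*|]$ are small but welcome additions over the paper's writeup.
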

\begin{proof}
First, we consider the case where $L_{\mathsf{m}}(G,G_{pl}) + 2 R_B^\mathsf{m}(G) \leq \frac{c-1}{2}\cdot \text{Err}(G_{pl})$. In this case, we can find some $\beta \in (0,\frac{c-1}{2}]$ such that.
\begin{align*}
    L_{\mathsf{m}}(G,G_{pl})+2R_B^\mathsf{m}(G) = \beta \text{Err}(G_{pl}) = \text{Err}(G_{pl}) + (\beta-1) \text{Err}(G_{pl})
\end{align*}
Thus, by lemma~\ref{lemma:main-bound}, we have:
\begin{align*}
    \text{Err}_\mathsf{m}(G) &\leq 2(\frac{\beta}{c-1} \text{Err}(G_{pl}) +R_B^\mathsf{m}(G)) + L_{\mathsf{m}}(G,G_{pl}) - \text{Err}(G_{pl}) \\
    &= \frac{2}{c-1} \beta\text{Err}(G_{pl}) + 2R_B^\mathsf{m}(G) + L_{\mathsf{m}}(G,G_{pl}) - \text{Err}(G_{pl}) \\
    &= \frac{2}{c-1} (L_{\mathsf{m}}(G,G_{pl}) + 2R_B^\mathsf{m}(G)) + 2R_B^\mathsf{m}(G) + L_{\mathsf{m}}(G,G_{pl}) - \text{Err}(G_{pl}) \\
    &\leq \frac{c+3}{c-1} L_{\mathsf{m}}(G,G_{pl}) + \frac{2c+2}{c-1} R_B^\mathsf{m}(G) - \text{Err}(G_{pl}) \\
    &= L(G)
\end{align*}

Next, we consider the case where $L_{\mathsf{m}}(G,G_{pl}) + 2 R_B^\mathsf{m}(G) > \frac{c-1}{2}\cdot \text{Err}(G_{pl})$. By triangle inequality, we have:
\begin{align*}
    \text{Err}_\mathsf{m}(G) = L_{\mathsf{m}}(G,G^*) &\leq L_{\mathsf{m}}(G,G_{pl}) + L_{\mathsf{m}}(G_{pl},G^*) \\
    &= L_{\mathsf{m}}(G,G_{pl}) + 2 \text{Err}(G_{pl}) - \text{Err}(G_{pl}) \\
    &< L_{\mathsf{m}}(G,G_{pl}) + \frac{4}{c-1} (L_{\mathsf{m}}(G,G_{pl}) + 2 R_B^\mathsf{m}(G)) - \text{Err}(G_{pl})\\
    &= \frac{c+3}{c-1} L_{\mathsf{m}}(G,G_{pl}) + \frac{8}{c-1}R_B^\mathsf{m}(G) - \text{Err}(G_{pl}) \\
    &\leq \frac{c+3}{c-1} L_{\mathsf{m}}(G,G_{pl}) + \frac{2c+2}{c-1}R_B^\mathsf{m}(G) - \text{Err}(G_{pl}) & \text{(using $c>3$)} \\
    &= L(G)
\end{align*}
\end{proof}

Now, we provide the proof of \Cref{thm:main2} and \Cref{thm:main} based on \Cref{thm:main-lemma}.
\begin{proof} [Proof of \Cref{thm:main2}]
   Since $\hat{G}$ is an optimizer of $L(G)$, we know $\text{Err}_\mathsf{m}(\hat{G}) \leq L(\hat{G}) \leq L(G^*)$. Substituting $G^*$ into $L(G)$ gives the bound in the theorem.
\end{proof}

\begin{proof} [Proof of \Cref{thm:main}]
Note that $\text{CertR}(G) \geq \frac{0.5-Err(G)}{Lip(G)}$ by its definition. Substituting $Err(G) \leq L(G)$ gives the inequality in the theorem.
\end{proof}





\section{Error Control of Newton's Method}
\label{subsec:newton-error-control}

    Recall that given an unconstrained matrix $V\in \sR^{n\times n}$, we know that $W=(VV^\T)^{-\frac 1 2}V$ is orthogonal, i.e., $||W||_2 = 1$, which provides certified robustness for the resulting model.
    In practice, we use a finite number of Newton's iteration steps to approximate $(VV^\T)^{-\frac 1 2}$.
    In this section, we provide the following theorem which rigorously control the spectral norm under finite Newton's iteration steps.
    
    \begin{adxtheorem}
        Given a matrix $V \in \sR^{n\times n}$ such that $||I-VV^\T||_2 < 1$, if we use Newton's iteraton for $k^\star$ steps following \Cref{eq:newton-iter} with initialization $Y_0 = VV^\T$ and $Z_0 = I$, then we have
        \begin{equation}
            || Z_{k^\star} V ||_2 \le 1 + \dfrac{||V||_2}{\sqrt{\rho_{\min}(VV^\T)}} (1-\sqrt{1-||I-VV^\T||_2^{2^{k^\star}}})
            \le 1 + \dfrac{||V||_2}{\sqrt{\rho_{\min}(VV^\T)}} ||I-VV^\T||_2^{2^{k^\star}},
        \end{equation}
        where $\rho_{\min}$ is the smallest eigenvalue of the matrix.
        \label{thm:err-control}
    \end{adxtheorem}
    
    
    \begin{remark}
        We make sure the condition $||I-VV^\T||_2 < 1$ is satisfied by rescaling as discussed in \Cref{subsec:method-ot}.
        As the theorem shows, along with the increase of Newton's iteration step $k^\star$, the spectral norm of $Z_{k^\star} V$ approaches $1$ where the additional term $||I-VV^\T||_2^{2^{k}}$ decays exponentially.
        Hence, we can rigorously bound the error in the orthogonalization process caused by finite steps and use the bound to determine how many finite steps are needed.
        Indeed, in practice, we apply singular value decomposition to the computed matrix $Z_{k^\star} V$, and find its maximum singular value always approaches $1$ from the left side, i.e., the actual spectral norm is equal to or smaller than $1$.
        Detail experimental verification is in \Cref{sec:app-err-control}.
    \end{remark}

\subsection{Proof of Theorem \ref{thm:err-control}}

    \label{adxsec:err-control-prf}

For brevity, throughout this section, for $V\in \sR^{n\times n}$, we define $A = VV^\T$.
Note that $A$ is a real symmetric matrix.
Then, we recursively define 
\begin{equation}
    \begin{aligned}
        & B_0 = I, \\
        & B_{k+1} = \frac 1 2 \left( 3 B_k - B_k^3 A \right).
    \end{aligned}
    \label{eq:B}
\end{equation}

Before proving the main theorem, we first present the following three lemmas.

\begin{lemma}
    For any $k\in \sN$, $B_k A = A B_k$ and $B_k A^{\frac 1 2} = A^{\frac 1 2} B_k$.
    \label{lemma:b-1}
\end{lemma}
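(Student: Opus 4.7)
The plan is straightforward induction on $k$, handling the two commutation statements in sequence.

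For the statement $B_k A = A B_k$, I would induct on $k$. The base case $k=0$ is immediate since $B_0 = I$ commutes with everything. For the inductive step, assuming $B_k A = A B_k$, I would simply compute both sides using the defining recursion $B_{k+1} = \frac{1}{2}(3B_k - B_k^3 A)$. The key observation is that $B_k A = A B_k$ implies $B_k^3 A = A B_k^3$ (just commute $A$ past each factor of $B_k$ in turn), so
\begin{equation*}
B_{k+1} A = \tfrac{1}{2}(3 B_k A - B_k^3 A^2) = \tfrac{1}{2}(3 A B_k - A B_k^3 A) = A B_{k+1}.
\end{equation*}

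For the statement $B_k A^{1/2} = A^{1/2} B_k$, the cleanest route is to reduce it to the first statement. Since $A = V V^\T$ is real symmetric and positive semidefinite, $A^{1/2}$ is well defined and can be expressed as a polynomial in $A$ in finite dimensions: by Lagrange interpolation on the (finite) spectrum $\{\lambda_1,\dots,\lambda_r\}$ of $A$, there is a polynomial $p$ such that $p(\lambda_i) = \sqrt{\lambda_i}$ for all $i$, and then $p(A) = A^{1/2}$. Since $B_k$ commutes with $A$, it commutes with every polynomial in $A$, and therefore with $A^{1/2}$.

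I do not expect any serious obstacle here; the only mildly non-routine point is justifying that $B_k$ commutes with $A^{1/2}$ given that $A$ is only assumed PSD (not necessarily invertible), which is why I prefer the polynomial-in-$A$ argument over simultaneous diagonalization (the $B_k$'s need not be symmetric, so they are not obviously simultaneously diagonalizable with $A$). An alternative, equally short argument would be to directly induct on the second claim as well, using the same recursion and commuting $A^{1/2}$ past $A = A^{1/2}\cdot A^{1/2}$ one factor at a time — but piggybacking on the first statement via the polynomial representation is more concise.
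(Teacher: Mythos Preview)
Your proof of the first commutation claim $B_kA = AB_k$ is identical to the paper's: induction on $k$ using the recursion for $B_{k+1}$.

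For the second claim $B_kA^{1/2} = A^{1/2}B_k$, the paper does \emph{not} reduce to the first claim but instead runs a parallel induction directly on $A^{1/2}$, computing $B_{k+1}A^{1/2} = \tfrac{1}{2}(3B_kA^{1/2} - B_k^3 A^{3/2}) = \tfrac{1}{2}(3A^{1/2}B_k - A^{1/2}B_k^3 A) = A^{1/2}B_{k+1}$ using only the induction hypothesis $B_kA^{1/2} = A^{1/2}B_k$. Your route via Lagrange interpolation --- expressing $A^{1/2}$ as a polynomial in $A$ and then invoking the first claim --- is a genuinely different and somewhat more conceptual argument: it explains \emph{why} the second claim must follow from the first (anything commuting with $A$ commutes with any function of $A$), and it handles the PSD case cleanly without worrying about whether $B_k$ is diagonalizable. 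The paper's approach is more self-contained and avoids introducing the interpolation fact, at the cost of essentially duplicating the induction. Both are correct and of comparable length.
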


\begin{proof}[Proof of \Cref{lemma:b-1}]
    We prove the lemma by induction.
    Since $B_0 = I$, for $k = 0$ the lemma holds.
    Suppose that the lemma holds for $k$, then we have
    \begin{equation}
        \begin{aligned}
            & B_{k+1}A = \frac 1 2(3B_k - B_k^3A) A = \frac 1 2 (3B_kA - B_k^3 A^2) \overset{(*)}{=} \frac 1 2 (3AB_k - A B_k^3 A) = A \cdot \frac 1 2 (3B_k - B_k^3 A) = AB_{k+1}, \\
            & B_{k+1}A^{\frac 1 2} = \frac 1 2 (3B_k - B_k^3 A) A^{\frac 1 2} = \frac 1 2 (3B_kA^{\frac 1 2} - B_k^3 A^{\frac 3 2}) \overset{(*)}{=} \frac 1 2 (3A^{\frac 1 2} B_k - A^{\frac 1 2} B_k^3 A) = A^{\frac 1 2} \cdot \frac 1 2 (3B_k - B_k^3 A) = A^{\frac 1 2} B_{k+1}.
        \end{aligned}
    \end{equation}
    In above equations, $(*)$ is due to the induction assumption that $B_kA = AB_k$ or $B_kA^{\frac 1 2} = A^{\frac 1 2}B_k$.
    Therefore, the lemma holds with $k+1$ and by induction the lemma holds for any $k\in \sN$.
\end{proof}

\begin{lemma}
    For any $k \in \sN$, $Y_k = B_k A$ and $Z_k = B_k$.
    \label{lemma:b-2}
\end{lemma}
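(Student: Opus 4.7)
The plan is to prove both identities simultaneously by induction on $k$, leaning on the commutation relations from \Cref{lemma:b-1}. For the base case $k=0$, the Newton iteration is initialized with $Y_0 = A$ and $Z_0 = I$, and by definition $B_0 = I$, so $Y_0 = A = B_0 A$ and $Z_0 = I = B_0$ hold trivially.

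For the inductive step, I would assume $Y_k = B_k A$ and $Z_k = B_k$. The first observation is that $Z_k Y_k = B_k \cdot B_k A = B_k^2 A$. Substituting into the update for $Y_{k+1}$ from \Cref{eq:newton-iter} gives
\begin{equation*}
Y_{k+1} = \tfrac{1}{2} B_k A \bigl(3I - B_k^2 A\bigr) = \tfrac{1}{2}\bigl(3 B_k A - B_k A B_k^2 A\bigr),
\end{equation*}
and I would want to match this against $B_{k+1} A = \tfrac{1}{2}(3 B_k A - B_k^3 A^2)$. The only non-trivial step is showing $B_k A B_k^2 A = B_k^3 A^2$, which follows by applying $B_k A = A B_k$ from \Cref{lemma:b-1} repeatedly to pull the middle $A$ through the $B_k^2$ factor. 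Similarly, for $Z_{k+1}$, I substitute $Z_k Y_k = B_k^2 A$ into the update rule to get
\begin{equation*}
Z_{k+1} = \tfrac{1}{2}\bigl(3I - B_k^2 A\bigr) B_k = \tfrac{1}{2}\bigl(3 B_k - B_k^2 A B_k\bigr),
\end{equation*}
and again use $A B_k = B_k A$ once to rewrite $B_k^2 A B_k = B_k^3 A$, which matches the definition of $B_{k+1}$ in \Cref{eq:B}.

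The whole argument is routine algebra; there is no real obstacle beyond making sure the commutations in \Cref{lemma:b-1} are invoked correctly, and carefully confirming the base case matches the chosen Newton initialization. Once both identities are verified at step $k+1$, induction closes the proof.
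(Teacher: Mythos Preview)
Your proposal is correct and follows essentially the same approach as the paper: an induction on $k$ with the base case from the Newton initialization and the inductive step handled by substituting $Z_kY_k = B_k^2 A$ and invoking the commutation $B_kA = AB_k$ from \Cref{lemma:b-1}. The paper's write-up is slightly more compressed (it factors the $A$ out directly rather than expanding and then re-collecting), but the argument is identical.
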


\begin{proof}[Proof of \Cref{lemma:b-2}]
    We prove the lemma by induction.
    Since $Y_0 = VV^\T = A$, $Z_0 = I$, and $B_0 = I$, for $k=0$ the lemma holds.
    Suppose that the lemma holds for $k$, then we have
    \begin{equation}
        \begin{aligned}
            & Y_{k+1} = \frac 1 2 Y_k (3I - Z_kY_k) = \frac 1 2 B_k A (3I - B_k^2 A) \overset{(*)}{=} \frac 1 2 (3B_k - B_k^3 A) \cdot A = B_{k+1} A, \\
            & Z_{k+1} = \frac 1 2 (3I - Z_kY_k) Z_k = \frac 1 2 (3I - B_k^2 A) B_k \overset{(*)}{=} \frac 1 2 (3B_k - B_k^3 A) = B_{k+1},
        \end{aligned}
    \end{equation}
    where $(*)$ leverages $B_kA = AB_k$ from \Cref{lemma:b-1}.
    Therefore, by induction the lemma holds for any $k\in \sN$.
\end{proof}

\begin{lemma}
    When $||I - A||_2 < 1$, for any $k\in \sN$,
    the eigenvalue $\lambda\in \sR$ of matrix $A^{\frac 1 2} B_k$ is positive.
    \label{lemma:b-3}
\end{lemma}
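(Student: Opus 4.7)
The plan is to exploit the fact that everything commutes with $A$ and diagonalizes simultaneously, so the claim about eigenvalues reduces to a one-dimensional iteration analysis.

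First, I would observe that $A = VV^\T$ is real symmetric and, under the hypothesis $\|I-A\|_2 < 1$, strictly positive definite with spectrum contained in $(0,2)$, so $A^{1/2}$ is the usual symmetric positive definite square root. A straightforward induction on the recursion \eqref{eq:B} shows that each $B_k$ is a polynomial in $A$ with real coefficients, hence symmetric; combined with \Cref{lemma:b-1} this yields that $A^{1/2}$ and $B_k$ commute and are both symmetric, so $A^{1/2} B_k$ is symmetric and all of its eigenvalues are real (so the qualifier ``$\lambda \in \sR$'' is automatic). Writing $A = U\Lambda U^\T$ with $\Lambda = \diag(\lambda_1, \ldots, \lambda_n)$, $\lambda_i \in (0,2)$, I would simultaneously diagonalize $A^{1/2}$ and $B_k$ in this basis: $B_k = U\,\diag(b_k(\lambda_1),\ldots,b_k(\lambda_n))\, U^\T$ for a scalar sequence $b_k(\lambda)$ with $b_0(\lambda) = 1$ and $b_{k+1}(\lambda) = \tfrac 12 b_k(\lambda)\,(3 - \lambda\, b_k(\lambda)^2)$.

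The eigenvalues of $A^{1/2} B_k$ are therefore exactly $t_k(\lambda_i) := \sqrt{\lambda_i}\, b_k(\lambda_i)$. Multiplying the recursion for $b_k$ by $\sqrt{\lambda}$ gives the clean scalar iteration
\begin{equation*}
    t_0 = \sqrt{\lambda}, \qquad t_{k+1} = \tfrac 12 t_k (3 - t_k^2),
\end{equation*}
so the whole lemma reduces to showing $t_k > 0$ for every $k\in\sN$ whenever $\lambda \in (0,2)$, i.e.\ $t_0 \in (0,\sqrt{2})$.

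Finally, I would analyze the map $f(t) = \tfrac{t(3-t^2)}{2}$. Its critical point in $[0,\sqrt 3]$ is at $t=1$ with $f(1) = 1$, so $f$ maps $(0,\sqrt 3) \to (0,1]$ and is monotone increasing on $[0,1]$ with $f(0)=0$, $f(1)=1$, so $f$ maps $(0,1]$ to $(0,1]$. Since $(0,\sqrt 2) \subset (0,\sqrt 3)$, the first iterate satisfies $t_1 = f(t_0) \in (0,1]$, and by induction $t_k \in (0,1]$ for all $k \ge 1$; together with $t_0 > 0$ this yields positivity for every $k \in \sN$. The main obstacle is really just this last interval-tracking argument: one has to be careful that starting from $t_0$ possibly larger than $1$ (up to $\sqrt 2$) one does not overshoot past the root at $\sqrt 3$, which is why the bound $\|I-A\|_2 < 1$ (giving $\lambda < 2 < 3$) is exactly what is needed; after one step we fall into the invariant interval $(0,1]$ and positivity is preserved forever.
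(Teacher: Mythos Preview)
Your proposal is correct and follows essentially the same approach as the paper: both reduce via simultaneous diagonalization to the scalar Newton iteration $t_{k+1} = \tfrac{1}{2} t_k(3-t_k^2)$ starting from $t_0 = \sqrt{\lambda} \in (0,\sqrt{2})$, and then analyze the map $f(t) = \tfrac{1}{2} t(3-t^2)$ to show positivity is preserved. The only cosmetic differences are that the paper diagonalizes $C_k := A^{1/2}B_k$ directly (rather than $A$) and uses $(0,\sqrt{2})$ as the invariant interval throughout, whereas you pass to the tighter invariant interval $(0,1]$ after one step; both are fine.
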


\begin{proof}[Proof of \Cref{lemma:b-3}]
    We define $C_k = A^{\frac 1 2} B_k$, then by leveraging the commutability between $B_k$ and $A$/$A^{\frac 1 2}$~(\Cref{lemma:b-1}) we have the following iteration:
    \begin{equation}
        \left\{
        \begin{aligned}
            & C_0 = A^{\frac 1 2}, \\
            & C_{k+1} = \frac 1 2 (3C_k - C_k^3).
        \end{aligned}
        \right.
    \end{equation}
    Since $||I-A||_2 < 1$ and $A$ is a real symmetric matrix, any eigenvalue of $C_0$, denoted by $\lambda_i^{C_0}$, $\in (0,\sqrt 2)$.
    Denoting the diagonalization of $C_0$ by $C_0 = P^{-1} D_0 P$ where $D_0 = \diag(\lambda_1^{C_0}, \cdots, \lambda_n^{C_0})$.
    Then, from the iteration, we have $C_{k+1} = P^{-1} \left(\frac 1 2(3D_k - D_k^3)\right) P$ and therefore $\lambda_i^{C_{k+1}} = \lambda_i^{C_k} \left( \frac 3 2 - \frac 1 2 (\lambda_i^{C_k})^2 \right)$.
    Define function $f: \sR \to \sR$ such that $f(x) = x(\frac 3 2 - \frac 1 2 x^2)$. 
    We find $f'(x) = 0 \Rightarrow x \in \{0,1\}$.
    Thus, when $x \in (0,\sqrt{2})$, $f(x) \in (0,1) \subseteq (0,\sqrt{2})$.
    Now we apply the induction.
    When $k=0$, we have $\lambda_i^{C_k} \in (0,\sqrt{2})$.
    Suppose $\lambda_i^{C_k} \in (0, \sqrt{2})$, from above result we have $\lambda_i^{C_{k+1}} = f(\lambda_i^{C_k}) \in (0,\sqrt{2})$.
    Thus, for any $k\in \sN$, all eigenvalues of $C_k = A^{\frac 1 2} B_k$ are positive.
\end{proof}

Now we are ready to prove the main theorem.

\begin{proof}[Proof of Theorem \ref{thm:err-control}]
   Since $Z_k = B_k$ for any $k\in \sN$ acccording to \Cref{lemma:b-2}, we focus on $B_k$ and its expression of iteration~(\Cref{eq:B}) henceforth.
   According to \cite[Section 6]{bini2005algorithms}, define $R_k = I - B_k^2 A$, we have
    $R_{k+1} = \frac{3}{4}R_k^2 + \frac{1}{4}R_k^3$.
    We have $||I-A||_2 = ||R_0||_2 < 1$.
    By induction,  
    \begin{equation}
        \begin{aligned}
            ||R_{k+1}||_2 & \le \frac{3}{4} ||R_{k}||_2^2 + \frac{1}{4}||R_{k}||_2^3 \\
            & \le \frac{3}{4} ||R_{k}||_2^2 + \frac{1}{4}||R_{k}||_2^2 & \text{(by induction condition $||R_k||_2 \le 1$)} \\
            & = ||R_k||_2^2.
        \end{aligned}
    \end{equation}
    Therefore, $||R_{k^\star}||_2 \le ||I-A||_2^{2^{k^\star}}$, i.e., the eigenvalues of the symmetric matrix $I - B_{k^\star}^2A$ are in the range $[-||I-A||_2^{2^{k^\star}}, ||I-A||_2^{2^{k^\star}}]$.
    
    Given an eigenvalue $\lambda \in \sR$ with eigenvector $x\in\sR^n$ of the real symmetric matrix $A^{\frac 1 2}B_{k^\star}$,
    we have
    \begin{equation}
        \begin{aligned}
            & A^{\frac 1 2}B_{k^\star}x = \lambda x \\
            \iff & B_{k^\star}^2 A x = \lambda^2 x & \text{(by \Cref{lemma:b-1})} \\
            \iff & (I-B_{k^\star}^2A)x = \left(1-\lambda^2\right) x.
        \end{aligned}
    \end{equation}
    Thus, $(1-\lambda^2)$ is an eigenvalue of matrix $(I-B_{k^\star}^2A)$.
    Since $(1-\lambda^2) \in [-||I-A||_2^{2^{k^\star}}, ||I-A||_2^{2^{k^\star}}]$, we get 
    \begin{equation}
        \min\{|\lambda-1|, |\lambda+1|\} \le 1 - \sqrt{1 - ||I-A||_2^{2^{k^\star}}}.
    \end{equation}
    
    Then, according to \Cref{lemma:b-3}, we know that $\lambda > 0$ and hence $\lambda \in \left[\sqrt{1 - ||I-A||_2^{2^{k^\star}}}, 2-\sqrt{1 - ||I-A||_2^{2^{k^\star}}}\right]$.
    Now we apply diagonalization to $A^{\frac 1 2}B_{k^\star}$:
    \begin{equation}
        A^{\frac 1 2} B_{k^\star} := P^\T \Lambda P.
    \end{equation}
    As a result,
    \begin{equation}
        \begin{aligned}
            & ||B_{k^\star} V||_2 \\
            = & ||A^{-\frac 1 2} P^\T \Lambda P V ||_2 \\
            = & || A^{-\frac 1 2} P^\T (\Lambda - I) P V + A^{-\frac 1 2} P^\T I P V ||_2 \\
            \le & ||A^{-\frac 1 2}||_2 \cdot (1-\sqrt{1-||I-A||_2^{2^{k^\star}}}) \cdot ||V||_2 + || A^{-\frac 1 2} V ||_2 \\
            = & ||A^{-\frac 1 2}||_2 \cdot ||V||_2 \cdot (1-\sqrt{1-||I-A||_2^{2^{k^\star}}})  + 1 \\
        \end{aligned}
        \label{eq:pf-err-control-prim}
    \end{equation}
    where the last equality uses the fact $A^{-\frac 1 2} V$ is a orthogonal matrix with spectral norm $1$.
    
    Since any eigenvector with eigenvalue $\lambda$ of $A^{-\frac 1 2}$ corresponds to the eigenvector with eigenvalue $(1/\lambda^2)$ of $A = VV^\T$, and $VV^T$ as a symmetric real matrix only has real eigenvalues,
    \begin{equation}
        ||A^{-\frac 1 2}||_2 = \dfrac{1}{\sqrt{\rho_{\min}(VV^\T)}}.
    \end{equation}
    Plug it into \Cref{eq:pf-err-control-prim}, we get
    \begin{equation}
        ||Z_{k^\star} V||_2 = ||B_{k^\star} V||_2 = 1 + \dfrac{||V||_2}{\sqrt{\rho_{\min}(VV^\T)}} (1-\sqrt{1-||I-A||_2^{2^{k^\star}}}).
    \end{equation}
    Notice that $1 - \sqrt{1-x} \le x$ for $x = ||I-A||_2^{2^{k^\star}} \in [0,1]$, we conclude the proof.

\end{proof}

\section{Addition Exp Results}
\subsection{Visualization}
\label{sec:app-visualize}
We show the representation visualization on 16 neurons for SOC and \name in Figure~\ref{fig:visualize-full}.
\begin{figure}[t]
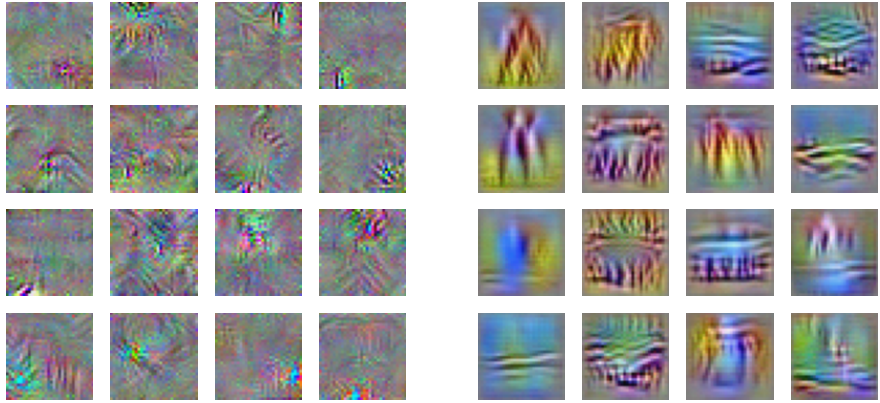

    \centering
    \includegraphics[width=0.4\textwidth]{figs/visualize-cifar10_4_soc_32_maxmin_cr0.0_res_b256.pdf}
    \hspace{0.2in}
    \includegraphics[width=0.4\textwidth]{figs/visualize-cifar10_4_ot_32_maxmin_cr0.0_res_b256.pdf}
    \caption{Visualizing the features in the last hidden layer of LipConvnet-20 for SOC (left) and \name (right). Each image corresponds to one randomly chosen neuron from the last hidden layer and is optimized to maximize the value of the neuron.}
    \label{fig:visualize-full}
\end{figure}

\subsection{Residual Connection}
\label{sec:app-residual}
We show the loss landscape for both \name and SOC in Figure~\ref{fig:contour-full}
\begin{figure}
    \centering
    \captionsetup[subfigure]{justification=centering}
    \begin{subfigure}[b]{0.24\textwidth}
         \centering
         \includegraphics[width=\textwidth]{figs/contour-cifar10_4_ot_32_maxmin_cr0.0_b256.pdf}
         \caption{\name, without \\residual connection.}
     \end{subfigure}
    \begin{subfigure}[b]{0.24\textwidth}
         \centering
         \includegraphics[width=\textwidth]{figs/contour-cifar10_4_ot_32_maxmin_cr0.0_res_b256.pdf}
         \caption{\name, with \\residual connection.}
     \end{subfigure}
    \begin{subfigure}[b]{0.24\textwidth}
         \centering
         \includegraphics[width=\textwidth]{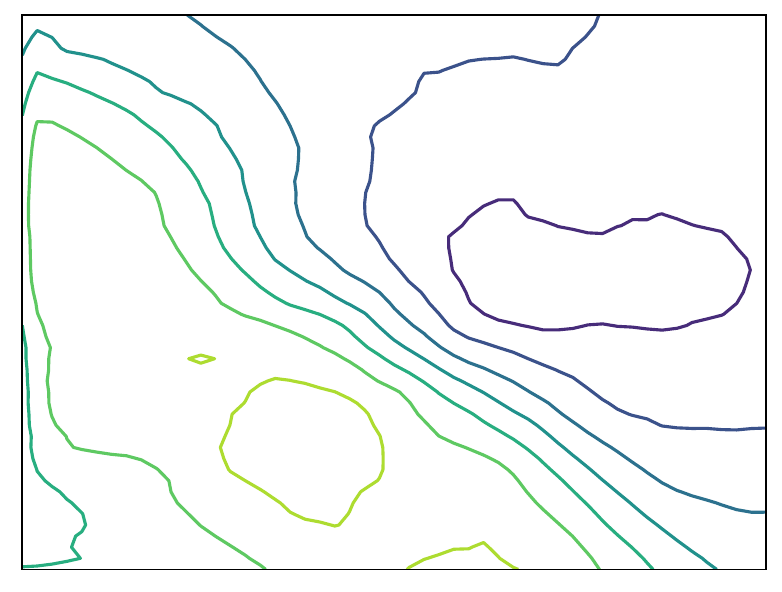}
         \caption{SOC, without \\residual connection.}
     \end{subfigure}
    \begin{subfigure}[b]{0.24\textwidth}
         \centering
         \includegraphics[width=\textwidth]{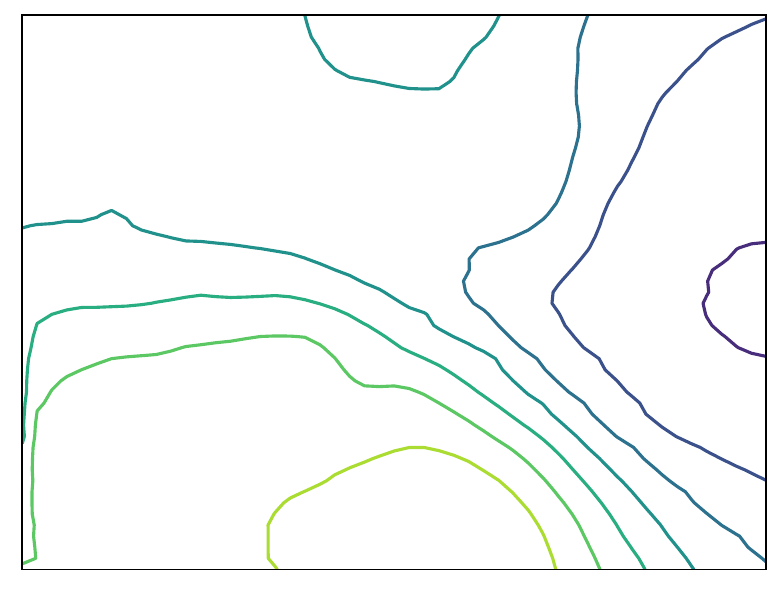}
         \caption{SOC, with \\residual connection.}
     \end{subfigure}
    \caption{The loss landscape~\cite{li2018visualizing} with respect to the parameters of \name and SOC network with and without residual connections. The figure is plotted by calculating the loss contour on two randomly chosen directions of parameters. We can observe that the residual connection greatly smoothifies the \name network, while both SOCs with and without residual connection are smooth.}
    \label{fig:contour-full}
\end{figure}

\subsection{Circular vs. Zero padding}
\label{sec:app-circ-pad}
\begin{table}[t]
    \centering
    \caption{Performance comparison of \name network with default circular padding and pre-processed zero padding.}
    \label{tab:abl-circular}
    \begin{tabular}{l l l | c | c c c}
        \toprule
        \multirow{2}{*}{\bf Model} & \bf Conv. & \multirow{2}{*}{\bf Padding} & \bf Vanilla & \multicolumn{3}{c}{\bf Certified Accuracy at $\rho=$} \\
        & \bf Type & \bf & \bf Accuracy & 36/255 & 72/255 & 108/255 \\
        \midrule
        \multirow{2}{*}{LipConvnet-20} & \multirow{2}{*}{\ot} & Circular-pad & 76.65\% & 61.15\% & 43.83\% & 28.78\% \\
        & & Zero-pad & \bf 77.86\% & \bf 63.54\% & \bf 47.15\% & \bf 32.12\% \\
		\bottomrule
    \end{tabular}
\end{table}

As we discussed in Section~\ref{subsec:method-ot}, the default parametrization leads to the convolution result with circular-padding, while we will pre-process the input to get the final result with zero-padding. In Table~\ref{tab:abl-circular}, we show the performance comparison between circular and zero padding. We can see that the performance significantly improves when we do the pre-processing and use the zero-padding instead of default circular-padding.

\subsection{Error Control of Newton's Iterations}
\label{sec:app-err-control}
As we discuss in Section~\ref{subsec:newton-error-control}, we use a finite number of Newton's iteration to approximate the inverse square root of a matrix $Z_{k^*} \approx (V V^\intercal)^{-\frac12}$. Theoretically, we have shown that the error of approximated orthogonal matrix $Z_{k^*} V$ will decay exponentially with number of iterations. Furthermore, we observe in practice that during the Newton's iteration, the maximum singular value of $Z_k V$ will always approach 1 from the left side. As an example, we show the maximum singular value ($\sigma_{max}$) for all \name layers in LipConvnet-20 at different steps ($k$) during the Newton's Iteration in Figure~\ref{fig:err-bound}. We can see that $\sigma_{max}<1$ for all the layers, so we can safely assume that the Lipschitz bound is no larger than 1. In addition, $\sigma_{max}>1-10^{-4}=0.9999$ after $k=8$ iterations for all the layers, which indicates that the Newton's iteration converges well.

\begin{figure}
    \centering
    \includegraphics[width=0.7\textwidth]{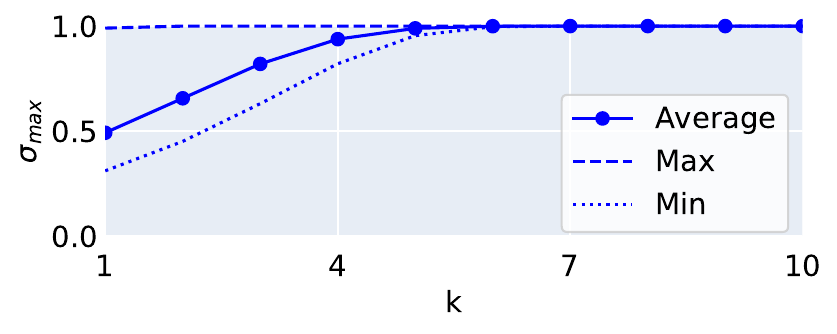}
    \caption{The maximum singular value $\sigma_{max}$ of $Z_k V$ for each of the \name layer in LipConvnet-20 during the Newton's iteration. We verify that all the $\sigma_{max}$'s are smaller than 1, and after the 8-th iteration, all the $\sigma_{max}$'s are larger than 0.9999.}
    \label{fig:err-bound}
\end{figure}

\subsection{Supervised Learning without CReg Loss and HH Activation}
\label{sec:app-cifar100}

\begin{table}[t]
    \centering
    \caption{Certified accuracy of 1-Lipschitz model without CReg loss and HH activation on CIFAR-10 in supervised setting. }
    \label{tab:sup-cifar10}
    \begin{tabular}{l l| c | c c c | c}
        \toprule
        \multirow{2}{*}{\bf Model} & \bf Conv. & \bf Vanilla & \multicolumn{3}{c|}{\bf Certified Accuracy at $\rho=$} & \bf Evaluation \\
        & \bf Type & \bf Accuracy & 36/255 & 72/255 & 108/255 & \bf Time (sec)\\
        \midrule
        \multirow{2}{*}{LipConvnet-5} & SOC & 75.78\% & 59.18\% & 42.01\% & 27.09\% & 2.117 \\
        & \ot & \bf 77.20\% & \bf 61.76\% & \bf 44.45\% & \bf 29.61\% & \bf 1.406 \\
        \midrule
        \multirow{2}{*}{LipConvnet-10} & SOC & 76.45\% & 60.86\% & 44.15\% & 29.15\% & 3.170 \\
        & \ot & \bf 77.30\% & \bf 62.54\% & \bf 46.03\% & \bf 30.64\% & \bf 1.420 \\
        \midrule
        \multirow{2}{*}{LipConvnet-15} & SOC & 76.68\% & 61.36\% & 44.28\% & 29.66\% & 3.993 \\
        & \ot & \bf 77.34\% & \bf 63.40\% & \bf 46.54\% & \bf 31.75\% & \bf 1.453 \\
        \midrule
        \multirow{2}{*}{LipConvnet-20} & SOC & 76.90\% & 61.87\% & 45.79\% & 31.08\% & 4.752 \\
        & \ot & \bf 77.86\% & \bf 63.54\% & \bf 47.15\% & \bf 32.12\% & \bf 1.558 \\
        \midrule
        \multirow{2}{*}{LipConvnet-25} & SOC & 75.24\% & 60.17\% & 43.55\% & 28.60\% & 5.613 \\
        & \ot & \bf 77.76\% & \bf 62.77\% & \bf 46.06\% & \bf 31.20\% & \bf 1.834 \\
        \midrule
        \multirow{2}{*}{LipConvnet-30} & SOC & 74.51\% & 59.06\% & 42.46\% & 28.05\% & 6.438 \\
        & \ot & \bf 77.34\% & \bf 62.76\% & \bf 46.24\% & \bf 31.07\% & \bf 2.219 \\
        \midrule
        \multirow{2}{*}{LipConvnet-35} & SOC & 73.73\% & 58.50\% & 41.75\% & 27.20\% & 7.400 \\
        & \ot & \bf 77.54\% & \bf 62.62\% & \bf 46.28\% & \bf 31.64\% & \bf 2.620 \\
        \midrule
        \multirow{2}{*}{LipConvnet-40} & SOC & 71.63\% & 54.39\% & 37.92\% & 24.13\% & 8.175 \\
        & \ot & \bf 77.79\% & \bf 62.69\% & \bf 46.34\% & \bf 31.32\% & \bf 2.910 \\
		\bottomrule
    \end{tabular}
\end{table}

\begin{table}[t]
    \centering
    \caption{Certified accuracy of 1-Lipschitz model without CReg loss, HH activation and LLN on CIFAR-100 in supervised setting. }
    \label{tab:sup-cifar100}
    \begin{tabular}{l l| c | c c c}
        \toprule
        \multirow{2}{*}{\bf Model} & \bf Conv. & \bf Vanilla & \multicolumn{3}{c}{\bf Certified Accuracy at $\rho=$} \\
        & \bf Type & \bf Accuracy & 36/255 & 72/255 & 108/255 \\
        \midrule
        \multirow{2}{*}{LipConvnet-5} & SOC & 42.71\% & 27.86\% & 17.45\% & 9.99\% \\
        & \ot & \bf 46.07\% & \bf 31.28\% & \bf 19.86\% & \bf 12.17\% \\
        \midrule
        \multirow{2}{*}{LipConvnet-10} & SOC & 43.72\% & 29.39\% & 18.56\% & 11.16\% \\
        & \ot & \bf 44.68\% & \bf 30.59\% & \bf 19.69\% & \bf 12.33\% \\
        \midrule
        \multirow{2}{*}{LipConvnet-15} & SOC & 42.92\% & 28.81\% & 17.93\% & 10.73\%  \\
        & \ot & \bf 46.01\% & \bf 32.08\% & \bf 20.72\% & \bf 12.92\% \\
        \midrule
        \multirow{2}{*}{LipConvnet-20} & SOC & 43.06\% & 29.34\% & 18.66\% & 11.20\%  \\
        & \ot & \bf 46.05\% & \bf 32.17\% & \bf 20.81\% & \bf 13.16\% \\
        \midrule
        \multirow{2}{*}{LipConvnet-25} & SOC & 43.37\% & 28.59\% & 18.18\% & 10.85\% \\
        & \ot & \bf 46.21\% & \bf 31.81\% & \bf 21.01\% & \bf 12.83\% \\
        \midrule
        \multirow{2}{*}{LipConvnet-30} & SOC & 42.87\% & 28.74\% & 18.47\% & 11.21\%  \\
        & \ot & \bf 45.71\% & \bf 32.23\% & \bf 20.87\% & \bf 13.03\%  \\
        \midrule
        \multirow{2}{*}{LipConvnet-35} & SOC & 42.42\% & 28.34\% & 18.10\% & 10.96\%  \\
        & \ot & \bf 45.38\% & \bf 31.03\% & \bf 20.02\% & \bf 12.46\%  \\
        \midrule
        \multirow{2}{*}{LipConvnet-40} & SOC & 41.84\% & 28.00\% & 17.40\% & 10.28\%  \\
        & \ot & \bf 45.30\% & \bf 30.91\% & \bf 19.97\% & \bf 12.36\%  \\
		\bottomrule
    \end{tabular}
\end{table}

We show the results of semi-supervised learning under standard setting (without CReg Loss, HH Activation, and LLN on CIFAR-100) in \Cref{tab:sup-cifar10} and \Cref{tab:sup-cifar100}. We can observe that we still achieve a good performance compared with SOC, and the gap is sometimes larger than with the different optimization techniques.

\subsection{Full semi-supervised}
\label{sec:app-semi-sup}

\begin{table}[h]
    \centering
    \caption{Certified accuracy of 1-Lipschitz model with CReg loss and HH activation on CIFAR-10 in semi-supervised setting.}
    \label{tab:ssl-full-cifar10-crhh}
    \begin{tabular}{l l| c | c c c}
        \toprule
        \multirow{2}{*}{\bf Model} & \bf Conv. & \bf Vanilla & \multicolumn{3}{c}{\bf Certified Accuracy at $\rho=$} \\
        & \bf Type & \bf Accuracy & 36/255 & 72/255 & 108/255 \\
        \midrule
        \multirow{2}{*}{LipConvnet-5 + CReg + HH} & SOC & 69.67\% & 59.28\% & 48.02\% & 38.30\% \\
        & \ot & \bf 71.52\% & \bf 61.25\% & \bf 50.66\% & \bf 40.31\%  \\
        \midrule
        \multirow{2}{*}{LipConvnet-10 + CReg + HH} & SOC & 71.10\% & 60.81\% & 50.61\% & 41.03\% \\
        & \ot & \bf 71.82\% & \bf 62.60\% & \bf 51.76\% & \bf 42.31\%   \\
        \midrule
        \multirow{2}{*}{LipConvnet-15 + CReg + HH} & SOC & 71.15\% & 61.65\% & 51.78\% & 42.53\% \\
        & \ot & \bf 71.89\% & \bf 62.37\% & \bf 52.24\% & \bf 42.71\%  \\
        \midrule
        \multirow{2}{*}{LipConvnet-20 + CReg + HH} & SOC & 70.95\% & 61.72\% & 51.78\% & 42.01\% \\
        & \ot & \bf 71.86\% & \bf 62.86\% & \bf 52.24\% & \bf 42.39\% \\
        \midrule
        \multirow{2}{*}{LipConvnet-25 + CReg + HH} & SOC & 70.58\% & 61.40\% & 51.46\% & 42.05\% \\
        & \ot & \bf 71.64\% & \bf 62.67\% & \bf 52.00\% & \bf 42.34\% \\
        \midrule
        \multirow{2}{*}{LipConvnet-30 + CReg + HH} & SOC & 70.37\% & 60.74\% & 51.23\% & 41.81\% \\
        & \ot & \bf 72.08\% & \bf 62.84\% & \bf 51.99\% & \bf 41.94\% \\
		\bottomrule
    \end{tabular}
\end{table}

\begin{table}[h]
    \centering
    \caption{Certified accuracy of 1-Lipschitz model without CReg loss and HH activation on CIFAR-10 in semi-supervised setting. }
    \label{tab:ssl-full-cifar10}
    \begin{tabular}{l l| c | c c c}
        \toprule
        \multirow{2}{*}{\bf Model} & \bf Conv. & \bf Vanilla & \multicolumn{3}{c}{\bf Certified Accuracy at $\rho=$} \\
        & \bf Type & \bf Accuracy & 36/255 & 72/255 & 108/255 \\
        \midrule
        \multirow{2}{*}{LipConvnet-5} & SOC & 70.67\% & 59.14\% & 45.88\% & 34.21\% \\
        & \ot & \bf 72.86\% & \bf 61.66\% & \bf 48.84\% & \bf 36.67\% \\
        \midrule
        \multirow{2}{*}{LipConvnet-10} & SOC & 71.86\% & 60.15\% & 48.01\% & 35.66\% \\
        & \ot & \bf 73.57\% & \bf 62.62\% & \bf 50.26\% & \bf 37.82\% \\
        \midrule
        \multirow{2}{*}{LipConvnet-15} & SOC & 73.05\% & 62.11\% & 50.44\% & 38.50\% \\
        & \ot & \bf 73.74\% & \bf 63.01\% & \bf 50.66\% & \bf 38.73\% \\
        \midrule
        \multirow{2}{*}{LipConvnet-20} & SOC & 72.77\% & 62.51\% & 50.40\% & 38.05\% \\
        & \ot & \bf 73.64\% & \bf 63.37\% & \bf 51.00\% & \bf 38.52\% \\
        \midrule
        \multirow{2}{*}{LipConvnet-25} & SOC & 72.45\% & 62.03\% & 50.12\% & 38.28\% \\
        & \ot & \bf 73.62\% & \bf 63.61\% & \bf 51.21\% & \bf 38.77\% \\
        \midrule
        \multirow{2}{*}{LipConvnet-30} & SOC & 71.04\% & 61.06\% & 49.30\% & 38.11\% \\
        & \ot & \bf 71.71\% & \bf 63.32\% & \bf 51.27\% & \bf 39.42\% \\
		\bottomrule
    \end{tabular}
\end{table}

We show the results for all architectures with semi-supervised learning in Table~\ref{tab:ssl-full-cifar10-crhh} (with CReg and HH) and \ref{tab:ssl-full-cifar10} (without CReg and HH). We can see observe that we still achieve a better performance compared with SOC. In addition, these results also have a larger certified accuracy at larger radius compared with the supervised learning setting.



{

\section{Results on TinyImageNet}

\begin{table}[t]
    \centering
    \caption{Certified accuracy of 1-Lipschitz model without CReg loss and HH activation on TinyImageNet in supervised setting. }
    \label{tab:sup-tiny}
    \begin{tabular}{l l| c | c c c }
        \toprule
        \multirow{2}{*}{\bf Model} & \bf Conv. & \bf Vanilla & \multicolumn{3}{c}{\bf Certified Accuracy at $\rho=$} \\
        & \bf Type & \bf Accuracy & 36/255 & 72/255 & 108/255 \\
        \midrule
        \multirow{2}{*}{LipConvnet-5} & SOC & 30.77\% & 19.74\% & 11.60\% & 6.89\% \\
        & \ot & \bf 32.71\% & \bf 21.44\% & \bf 12.96\% & \bf 7.92\%  \\
        \midrule
        \multirow{2}{*}{LipConvnet-10} & SOC & 31.94\% & 21.21\% & 12.80\% & \bf 7.79\% \\
        & \ot & \bf 32.31\% & \bf 21.22\% & \bf 12.96\% & 7.75\% \\
        \midrule
        \multirow{2}{*}{LipConvnet-15} & SOC & 32.26\% & 21.36\% & 12.94\% & 7.80\% \\
        & \ot & \bf 33.14\% & \bf 22.21\% & \bf 13.34\% & \bf 8.12\%  \\
        \midrule
        \multirow{2}{*}{LipConvnet-20} & SOC & 32.44\% & 21.27\% & 12.90\% & 7.63\%  \\
        & \ot & \bf 33.19\% & \bf 22.02\% & \bf 13.42\% & \bf 8.12\% \\
        \midrule
		\bottomrule
    \end{tabular}
\end{table}
To further validate the performance of \name, we evaluate it on the TinyImageNet dataset which consists of 100,000 $64\times 64$ images in 200 classes. We use the same model architecture and training setting as before and show the comparison between \name and SOC in Table \ref{tab:sup-tiny}. We can observe that LOT still outperforms the existing SOC approach in most cases. Meanwhile, we observe that all 1-Lipschitz models have a performance drop on the larger dataset compared with vanilla models, and we leave the breakthrough as future work.

\section{Discussion of Model Architectures}



\begin{table}[t]
    \centering
     \caption{\small Certified accuracy of 1-Lipschitz ResNet-18 model without CReg loss and HH activation on CIFAR-10/TinyImageNet in supervised setting}
     \resizebox{\textwidth}{!}{
    \begin{tabular}{l l|| c | c c c || c | c c c}
        \toprule
        \multirow{3}{*}{\bf Model} & \multirow{3}{*}{\bf\shortstack{Conv.\\ Type}} & \multicolumn{4}{c||}{CIFAR-10} & \multicolumn{4}{c}{TinyImageNet} \\
        \cline{3-10}
        & & \bf Vanilla & \multicolumn{3}{c||}{\bf Certified Accuracy at $\rho=$} & \bf Vanilla & \multicolumn{3}{c}{\bf Certified Accuracy at $\rho=$} \\
        & & \bf Accuracy & 36/255 & 72/255 & 108/255 & \bf Accuracy & 36/255 & 72/255 & 108/255 \\
        \midrule
        \multirow{2}{*}{\shortstack{ResNet-18}} & SOC & 66.43\% & 43.00\% & 23.00\% & 9.52\% & 23.26\% & 11.64\% & 5.21\% & 2.17\% \\
        & \ot &\bf  \bf 68.85\% & \bf 45.46\% & \bf 25.43\% & \bf 11.35\% & \bf 25.09\% & \bf 12.83\% & \bf 5.90\% & \bf 2.62\% \\
        \hline
    \end{tabular}
    }
    \label{tab:sup-resnet}
\end{table}
We would like to emphasize that there are two major differences in designing 1-Lipschitz networks and standard networks. First, standard architectures focus a lot on regularizing the network so that it does not overfit (e.g. with dropout), while in 1-Lipschitz networks we do not need strong regularization because the 1-Lipschitz constraint is already strong enough. Second, standard architectures are carefully designed so that the gradient will propagate in a proper manner (e.g. residual connection, BatchNorm), while 1-Lipschitz networks have an intrinsic gradient-norm-preserving property (see \cite{li2019preventing}). Thus, it may not be appropriate to directly utilize modern architectures in the design of 1-Lipschitz networks. In Table~\ref{tab:sup-resnet}, we show the results of a 1-Lipschitz ResNet18 model, where we (1) replace all the conv layers with 1-Lipschitz convolution, (2) replace all the residual connection with average and (3) enforce the variance parameter of BatchNorm layers to be 1 so that the model is 1-Lipschitz. We can observe that \name still outperforms SOC on both CIFAR-10 and TinyImageNet, while the performance of the ResNet18 architecture is not as good as that of LipConvNet.

}
\end{document}